\newtheorem{lemma}{Lemma}
\newtheorem{theorem}{Theorem}
\newtheorem{proposition}{Proposition}
\newtheorem{corollary}{Corollary}
\DeclareMathOperator*{\argmax}{argmax}
\begin{document}
\title{Stream Clipper: \\Scalable Submodular Maximization on Stream}
\author{Tianyi Zhou}
\affiliation{%
  \institution{University of Washington, Seattle}
}
\email{tianyizh@uw.edu}

\author{Jeff Bilmes}
\affiliation{%
  \institution{University of Washington, Seattle}
}
\email{bilmes@uw.edu}

\begin{abstract}
	We propose a streaming submodular maximization algorithm ``stream clipper'' that performs as well as the offline greedy algorithm on document/video summarization in practice. It adds elements from a stream either to a solution set $S$ or to an extra buffer $B$ based on two adaptive thresholds, and improves $S$ by a final greedy step that starts from $S$ adding elements from $B$. During this process, swapping elements out of $S$ can occur if doing so yields improvements. The thresholds adapt based on if current memory utilization exceeds a budget, e.g., it increases the lower threshold, and removes from the buffer $B$ elements below the new lower threshold. We show that, while our approximation factor in the worst case is $1/2$ (like in previous work, and corresponding to the tight bound), we show that there are data-dependent conditions where our bound falls within the range $[1/2, 1-1/e]$. In news and video summarization experiments, the algorithm consistently outperforms other streaming methods, and, while using significantly less computation and memory, performs similarly to the offline greedy algorithm. 
\end{abstract} 

\keywords{submodular maximization, streaming algorithm, summarization}

\maketitle

\section{Introduction}

Success in today's machine learning and artificial intelligence algorithms relies largely on big data. Often, however, there may exist a small data subset that can act as a surrogate for the whole. Thus, various summarization methods have been designed to select such representative subsets and reduce redundancy. The problem is usually formulated as maximizing a score function $f(S)$ that assigns importance scores to subsets $S\subseteq V$ of an underlying ground set $V$ of all elements. Submodular functions are a useful class of functions for this purpose: a function $f:2^V\rightarrow \mathbb R$ is submodular~\cite{Fujishige} if for any subset $A\subseteq B\subseteq V$ and $v \notin B$, 
\begin{equation}
f(v\cup A) - f(A)\geq f(v\cup B) - f(B).
\end{equation}
Since the above diminishing returns property naturally captures the redundancy among elements in terms of their importance to a summary, submodular functions have been commonly used as objectives in summarization and machine learning applications. The importance of $v$'s contribution to $A$ is $f(v|A)\triangleq f(v\cup A) - f(A)$, called the ``marginal gain'' of $v$ conditioned on $A$.

The objective $f(\cdot)$ can be chosen from a rich class of submodular functions, e.g., facility location, saturated coverage, feature based, entropy and $\log\det(\cdot)$. We focus on the most commonly used form: normalized and monotone non-decreasing submodular functions, i.e., $f(v|A) > 0, \forall v\in V\backslash A, A\subseteq V$ and $f(\emptyset) = 0$. In order for a summary $S$ to have a limited size, a cardinality constraint is often applied, as we focus on in this paper. We also address, however, knapsack and matroid constraints in \cite{Supp}. Under a cardinality constraint, the problem becomes
\begin{equation}\label{equ:card}
\max_{S\subseteq V,\\|S|\leq k}f(S).
\end{equation}
Submodular maximization is usually NP-hard. However, (\ref{equ:card}) can be solved near-optimally by a greedy algorithm with approximation factor $1-1/e$ \cite{greedyapprox}. Starting from $S\leftarrow\emptyset$, greedy algorithm selects the element with the largest marginal gain $f(v|S)$ into $S$, i.e., $S\leftarrow S\cup\argmax_{v\in V\backslash S}f(v|S)$, until $|S|=k$. To accelerate the greedy algorithm without an objective value loss, the lazy greedy approach~\cite{lazygreedy, LeskovecLazyGreedy} updates only the top element of a priority queue of marginal gains for all elements in $V\backslash S$ in each step. Recent approximate greedy algorithms~\cite{fast-submodular-semigradient, fast-multi-stage, lazier-than-lazy} develop piece-wise, multi-stage, or random sampling strategies to tradeoff approximate optimality and speed.

\begin{figure*}
	\begin{center}
		\includegraphics[width=1\linewidth]{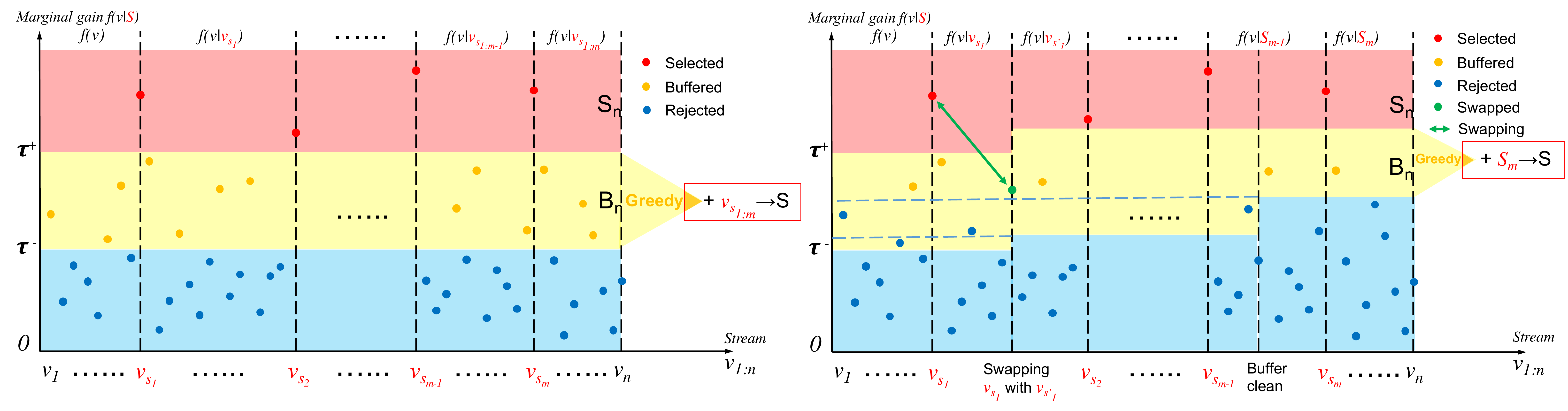}
	\end{center}\vspace{-3mm}
	\caption{Left: Na\"{i}ve stream clipper in Algorithm~\ref{alg:StreamBuffer}, $(v_1, v_2, \cdots, v_n)$ is the stream, $(v_{s_1}, v_{s_2}, \cdots, v_{s_m})$ is the sequence of the $m$ selected elements; Right: Stream clipper with swapping and buffer cleaning in Algorithm~\ref{alg:StreamClipper}, swapping replaces $v_{s_1}$ with $v_{s'_1}$ and increases both $\tau^-$ and $\tau^+$, buffer cleaning removes elements by increasing $\tau^-$.}
	\label{fig:sc}
\end{figure*}

In various applications such as news digesting, video summarization \cite{MirzasoleimanJ017}, music recommending and photo sharing, data is fed into a system as a stream $(v_1, v_2, \dots.)$ and under a particular order. At any time point $n$, the user can request a summary of the $n$ elements $v_{1:n}\triangleq\{v_1, v_2, \dots, v_n\}$ he/she has seen so far. The greedy algorithm and its variants are not appropriate to the streaming setting both for memory and computational reasons, i.e., they require storing all $n$ elements in advance, and computing their marginal gains each step. In this paper, we study how to solve (\ref{equ:card}) with $V=v_{1:n}$ for any $n$ in the streaming setting in one pass using a memory of size only $b+k\ll n$, where $b$ is the number of buffered elements and $k$ is the number of elements in the solution set. 

\subsection{Related Work}
\label{sec:related-work}

Various strategies have been proposed in previous work to solve (\ref{equ:card}) in the streaming setting. A thresholding algorithm in~\cite{Badanidiyuru} adds element $v$ to a summary $S$ if its marginal gain $f(v|S)$ exceeds a threshold $\tau=\frac{f(S^*)/2-f(S)}{k-|S|}$, where  $S^*\in\argmax_{S\subseteq v_{1:n},\\|S|\leq k}f(S)$ and $f(S^*)$ is the global maxima. One function evaluation is required per step for computing $f(v|S)$. However, $f(S^*)$ in $\tau$ is not known in advance for a stream so the proposed sieve-streaming algorithm starts by running multiple instances of the thresholding algorithm with different estimates of $f(S^*)$, and dynamically removes the instances whose estimates of $f(S^*)$ lie outside the interval updated by the maximal singleton gain. At the end, the instance achieving the largest $f(S)$ is used for the  solution $S$. It has a guarantee of $f(S)\geq \left(\frac{1}{2}-\epsilon\right)f(S^*)$ with $O(k\log k/\epsilon)$ memory. A sliding window method based on thresholding \cite{ChenNZ16} has also been proposed that emphasizes recent data.

Swapping between new elements and the ones in $S$ is a natural yet more computationally expensive strategy~\cite{Buchbinder, Chekuri, gomes10budgeted}. The algorithm initializes $S$ with the first $k$ elements from the stream, and keeps replacing a new element $v$ and $u\in S$ once $f(S\cup v\backslash u)\geq (1+c/k)f(S)$ \cite{Buchbinder} or $f(v|S)\geq\alpha+(1+\beta)f(u|S_u)$ \cite{Chekuri}, where $c$, $\alpha$ and $\beta$ are nonnegative constants, and $S_u$ denotes the historical solution set $S$ right before adding $u$ to it. Both cases have guarantee $f(S)\geq \frac{1}{4}f(S^*)$ (when $c=1$ for the former) with memory size $k$. The latter requires less computation, i.e, one function evaluation per element, comparing to $k$ evaluations required by the former.

A mini-batch based strategy splits the whole stream evenly into $k$ segments, and sequentially adds to $S$ the element $v$ with the largest marginal gain $f(v|S)$ in each segment. It was introduced via the submodular secretary problem and its extensions \cite{Bateni}. This algorithm has an approximation bound of $f(S)\geq \frac{1-1/e}{e}f(S^*)$ in expectation with memory size $k$, if the data arrives in a uniformly at random order. This method requires only one function evaluation per element, but it needs to know the length of the stream $n$ in advance, impossible when the stream is unboundedly large and a summary can be requested at any time.

A hardness result is given in Theorem 1.6 of \cite{Buchbinder}: for solving (\ref{equ:card}) in the online setting, there is no deterministic algorithm $1/2+\epsilon$-competitive for any constant $\epsilon>0$. In Lemma 4.7 of \cite{Buchbinder} (Lemma 4.11 in its arXiv version), the approximation factor in the worst case cannot exceed $1/2+\epsilon$ unless $b\geq n-k$ and all the $n$ elements up to a summary request is stored in the memory. Note the online setting in \cite{Buchbinder}\footnote{The online setting in \cite{Buchbinder} is, and we quote: ``The elements of N arrive one by one in an online fashion. Upon arrival, the online algorithm must decide whether to accept each revealed element into its solution and this decision is irrevocable.''} is slightly different from our streaming setting in that it does not allow the buffering of unselected elements. However, it is trivial to generalize the $1/2$-hardness to algorithms with buffer size $b\leq n-k$. In particular, we consider the submodular function used in the proof of Lemma 4.7 in \cite{Buchbinder}, and use their notations for $u$ and $v$: the hardness stays $1/2$ unless the algorithm buffers at least one $u$, but since the algorithm cannot distinguish $u$ and $v$ until seeing the last element $w$, it needs to buffer at least $n-k+1$ elements to ensure that one $u$ is stored in the buffer. 

Different settings for streaming submodular maximization have also been studied recently. A robust streaming algorithm \cite{robustStream} has been studied for when the data provider has the right to delete at most $m$ elements due to privacy concerns. Given any single-pass streaming algorithm with an $\alpha$-approximation guarantee, it runs a cascading chain of $r$ instances of such an algorithm with non-overlapping solutions to ensure that only one solution is affected by a deletion. Its solution still satisfies a $\alpha$-approximation guarantee when $m$ deletions are allowed. Another popularly studied setting is submodular maximization with sliding windows \cite{slidingWindow}, which aims to maintain a solution that takes only the last $W$ items into account.

In the present paper, we mainly focus on the classical streaming
setting where deletion or sliding windows is not considered.  Our
method, however, can be applied as a streaming algorithm subroutine in
the deletion-robust setting of \cite{robustStream}.



\subsection{Our Approach}

In practice, the thresholding algorithm must try a large number of thresholds $\tau$
(associated with different estimates of $f(S^*)$) to obtain a sufficiently good solution, because the solution set is sensitive to tiny changes in threshold $\tau$. This results in a high memory load. Though swapping and mini-batch strategies ask for a smaller memory size $k$, the former requires $k$ function evaluations per step, while the latter needs to know $n$ in advance and requires uniformly at random ordered elements, which cannot be justified in a streaming setting. Although the worst-case approximation factors of the three algorithms are $1/2$, $1/4$ and $(1-1/e)/e$ respectively, they perform much poorer in practice than the offline greedy algorithm, which has the worst-case approximation factor $1-1/e$ but usually performs much better than $1-1/e$.


The main contributions of this paper is a novel streaming algorithm (that
we call ``stream clipper'') that can achieve similar empirical performance to the offline greedy algorithm, and we analyze when this is the case. It is given in Algorithm~\ref{alg:StreamBuffer} and illustrated in the left plot of Figure~\ref{fig:sc}. It uses two thresholds $\tau^-$ and $\tau^+\geq \tau^-$ to process each element $v$: it adds $v$ to the solution set $S$ if $f(v|S)\geq \tau^+$; rejects $v$ if $f(v|S)\leq\tau^-$; otherwise (i.e., $f(v|S)\in(\tau^-,\tau^+)$) places $v$ in a buffer $B$. The final solution is generated by a greedy algorithm starting from the obtained $S$ and adds more elements from $B$ to $S$ until $S$ reaches the budget size $k$. Since the elements with marginal gains slightly less than $\tau^+$ are saved in $B$ and given a second chance to be selected into $S$, the two-threshold scheme mitigates the instability of a single thresholding method without requiring the testing of a large number of different thresholds simultaneously.

According to the hardness analysis in \cite{Buchbinder}, the worst-case approximation factor of stream clipper cannot exceed $1/2$ for memory size $b<n-k$. However, we explicitly show that in some cases when thresholds $\tau^-$ and $\tau^+$ fulfill certain data dependent conditions, its approximation factor lies in $[1/2, 1-1/e]$. In addition, given $\tau^-$, $\tau^+$ and a data stream to process, we show simple conditions to justify when stream clipper can guarantee an approximation factor $1-\alpha$ for any $\alpha\in[0, 1/2]$. 

An advanced version of stream clipper is given in Algorithm~\ref{alg:StreamClipper} with illustration in the right plot in Figure~\ref{fig:sc}. It allows an element in buffer $B$ to replace some element in $S$, if such swapping improves the objective $f(S)$. This avoids extra computation spent on swapping for every new element $v\in V$. In addition, the advanced version adapts thresholds to remove elements from the buffer once its size exceeds a user defined limit $b$. This guarantees memory efficiency even for a poor initialization of the thresholds. In Section \ref{sec:exp}, experiments on news and video summarization show that stream clipper significantly outperforms other streaming algorithms consistently (Figure~\ref{fig:nchange_box}-\ref{fig:NYT_utility}, Figure~\ref{fig:videoF1}). In most experiments, it achieves $f(S)$ as large as the offline greedy algorithm, and produces a summary of similar quality, but costs much less memory and computation due to its streaming setting.

\section{Stream Clipper}

In the following, we first introduce a na\"{i}ve stream clipper and then later its advanced version with swapping, threshold adaptation, and buffer cleaning procedures. Detailed analysis of the approximation bound in different cases (rather then the worst case) for the na\"{i}ve version follows. We further show the analysis can be extended to the advanced version. In the following, we use the letters ``A'' for Algorithm and ``L'' for line. For example, A\ref{alg:StreamBuffer}.L2-5 refers to Lines 2-5 of Algorithm~\ref{alg:StreamBuffer}. 


\subsection{Na\"{i}ve Stream Clipper}

\begin{algorithm}
	\SetKwInOut{Input}{Input}\SetKwInOut{Output}{Output}\SetKwInOut{Initialize}{Initialize}
	\DontPrintSemicolon
	\Input{$(v_1,v_2,\dots,v_n)$, $k$, $\tau^-$, $\tau^+$}
	\Output{$S$}
	\Initialize{$S\leftarrow\emptyset$, $B\leftarrow\emptyset$}
	\BlankLine
	\For{$i\leftarrow 1$ \KwTo $n$}{
		\uIf{$f(v_i|S)\geq \tau^+$ and $|S|<k$}{
			$S\leftarrow S\cup v_i$\;
		}
		\uElseIf{$f(v_i|S)> \tau^-$}{
			$B\leftarrow B\cup v_i$
		}
		\Else{Reject $v_i$
		}
	}
	\While{$|S|<k$}{
		$v^*=\argmax_{v\in B} f(v|S)$\;
		$S\leftarrow S\cup v^*$, $B\leftarrow B\backslash v^*$\;
	}
	\caption{\texttt{na\"{i}ve\_stream\_clipper}}
	\label{alg:StreamBuffer}
\end{algorithm}

We first give a na\"{i}ve version of stream clipper in Algorithm~\ref{alg:StreamBuffer}. It selects element $v$ if $f(v|S)\geq \tau^+$ and $|S|<k$, and stores $f(v|S)$ in $s(v)$ (A\ref{alg:StreamBuffer}.L2-3), while rejects $v$ if $f(v|S)\leq \tau^-$ (A\ref{alg:StreamBuffer}.L7). It places $v$ whose marginal gain is between $\tau^-$ and $\tau^+$ (A\ref{alg:StreamBuffer}.L4) into the buffer $B$ (A\ref{alg:StreamBuffer}.L5). Once a summary is requested, a greedy algorithm (A\ref{alg:StreamBuffer}.L8-10) adds more elements from $B$ to $S$ until $|S|=k$. 

In the following, we use $S_i$ and $B_i$ to represent $S$ and $B$ at the end of the $i^{th}$ iteration of the for-loop in Algorithm~\ref{alg:StreamBuffer}. Note $S_n$ and $B_n$ are the solution $S$ and buffer $B$ after passing $n$ elements but before running greedy procedure in  A\ref{alg:StreamBuffer}.L8-10. We use $S_{sc}$ to represent the final solution of Algorithm~\ref{alg:StreamBuffer}, use $m$ for the size of $S_n$, and use $v_{s_i}$ to denote the $i^{th}$ selected element by A\ref{alg:StreamBuffer}.L3. In above algorithm, the thresholds $\tau^-$ and $\tau^+$ are fixed, so tuning them is important for getting a good solution. However, in the advanced version introduced below, they are updated adaptively with the incoming data stream, and thus more robust to the initialization values.

\subsection{Advanced Stream Clipper}

\begin{algorithm}[h]
	\SetKwInOut{Input}{Input}\SetKwInOut{Output}{Output}\SetKwInOut{Initialize}{Initialize}
	\DontPrintSemicolon
	\Input{$(v_1, v_2, \dots, v_n)$, $k$, $b$, $\hat f(S^*)$}
	\Output{$S$}
	\Initialize{$S\leftarrow\emptyset$, $B\leftarrow\emptyset$, $\Delta \tau=\frac{\hat f(S^*)}{20k}$, $\tau^-=\frac{\hat f(S^*)}{2k}-\Delta \tau$, $\tau^+=\frac{\hat f(S^*)}{2k}+\Delta \tau$}
	\BlankLine
	\For{$i\leftarrow 1$ \KwTo $n$}{
		\uIf{$f(v_i|S)\geq \tau^+$ and $|S|<k$}{
			$S\leftarrow S\cup v_i$
		}
		\uElseIf{$f(v_i|S)> \tau^-$}{
			$u\in\argmax_{w\in S} f(S\backslash w\cup v_i)$\;
			$\rho=[f[(S\backslash u)\cup v_i]-f(S)]/|S|$\;
			\eIf{$\rho>0$}{
				$\tau^-\leftarrow\tau^-+f(u|S\backslash u\cup v_i)$\;
				$\tau^+\leftarrow\tau^++\rho$\;
				$S\leftarrow S\backslash u\cup v_i$
			}{
				$B\leftarrow B\cup v_i$
			}			
		}
		\Else{Reject $v_i$
		}
		\While{$|B|=b$}{
			$\tau^-\leftarrow \min\{\tau^-+\Delta \tau, \tau^+\}$\;
			$B\leftarrow B\backslash\{v\in B:f(v|S)\leq \tau^-\}$
		}
	}
	\While{$|S|<k$}{
		$v^*\in\argmax_{v\in B} f(v|S)$\;
		$S\leftarrow S\cup v^*$, $B\leftarrow B\backslash v^*$\;
	}
	\caption{\texttt{stream\_clipper}}
	\label{alg:StreamClipper}
\end{algorithm}

In practice, we develop two additional strategies to (1) achieve further improvement by occasional  swapping between buffered element in $B$ and element in solution $S$, and (2) keep the buffer size $|B|\leq b$ by removing unimportant elements from $B$. The advanced version of stream clipper after applying these two strategies is given in Algorithm~\ref{alg:StreamClipper}, where A\ref{alg:StreamClipper}.L5-10 denotes the first strategy, and A\ref{alg:StreamClipper}.L15-17 denotes the second strategy. Algorithm~\ref{alg:StreamClipper} is the same as Algorithm~\ref{alg:StreamBuffer} if we ignore these steps.


The swapping procedure in A\ref{alg:StreamClipper}.L5-10 is applied only to the new element $v_i$ whose marginal gain is between $\tau^-$ and $\tau^+$. A\ref{alg:StreamClipper}.L5 computes the objective $f[(S\backslash w)\cup v_i]$ for all the possible swappings between $v_i$ and element $w\in S$, and finds $u\in S$ achieving the maximal objective $f[(S\backslash u)\cup v_i]$. A\ref{alg:StreamClipper}.L6 computes $\rho$, the average of the swapping gain on the objective over all $|S|$ elements in $S$. If $\rho>0$, which means swapping brings positive improvements to the objective, the swapping is committed as in A\ref{alg:StreamClipper}.L10. Comparing to previous swapping methods \cite{Buchbinder} that computes $f[(S\backslash w)\cup v_i]$ for all new element $v_i$, stream clipper only computes A\ref{alg:StreamClipper}.L5 for $v_i$ such that $f(v_i|S)\in(\tau^-,\tau^+)$. This improves the efficiency since computing $A\ref{alg:StreamClipper}.L5$ requires $|S|$ function evaluations.

When the buffer size reaches the user defined limit $b$, stream clipper increases $\tau^-$ by step size $\Delta \tau$ as shown in A\ref{alg:StreamClipper}.L16. Since the lower threshold $\tau^-$ increases, elements in buffer $B$ whose marginal gain $f(v|S)\leq\tau^-$ can be removed from $B$ (A\ref{alg:StreamClipper}.L17). We repeat this buffer cleaning procedure until $|B|<b$. Note the maximal value of $\tau^-$ after it increases is $\tau^+$, because $|B|=0$ if $\tau^-=\tau^+$. 


In Algorithm~\ref{alg:StreamClipper}, parameter $\hat f(S^*)$ is an estimate to $f(S^*)$. In practice, it can be initialized as $f(v_1)$ and increased to $f(S)$ according to solution set $S$ achieved in later steps. We initialize the ``step size'' $\Delta\tau$ as $\hat f(S^*)/20k$ since it works well empirically. The two thresholds are initialized as shown in Algorithm~\ref{alg:StreamClipper}. Note we can start with a sufficiently small $\tau^-$ is to guarantee $|B_n|\geq k-|S_n|$ and $\tau^+\geq \tau^-$, and adaptively increase it later as in A \ref{alg:StreamClipper}.L16.

\subsection{Approximation Bound}\label{sec:bound}

We study the approximation bound of Algorithm~\ref{alg:StreamBuffer} in different cases rather than the worst case. Firstly, we assume $\tau^-$ is properly selected so $|B_n|\geq k-|S_n|$. This guarantees $k-|S_n|$ elements are selected into $S$ by the greedy algorithm in A\ref{alg:StreamBuffer}.L8-10 and thus there are $k$ elements in the final output $S_{sc}$. A trivial choice of $\tau^-$ is $0$.
\begin{lemma}\label{lemma:lt0}
	If $\tau^-=0$ and $\tau^+\geq \tau^-$, then $|B_n|\geq k-|S_n|$ before A\ref{alg:StreamBuffer}.L8.
\end{lemma}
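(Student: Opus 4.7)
The plan is to exploit the strict-positivity assumption on marginal gains that the paper states in its setup, namely that $f(v|A)>0$ for every $v\in V\setminus A$. Under this assumption, $\tau^-=0$ simply disables the rejection branch of Algorithm~\ref{alg:StreamBuffer} entirely, which reduces the lemma to a one-line counting argument.

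First, I would walk through the three branches of the for-loop. An element $v_i$ is rejected only via line 7, which is reached precisely when $f(v_i|S_{i-1})\le \tau^-=0$. Since $S$ only grows in the for-loop, $v_i\notin S_{i-1}$, and strict positivity then forces $f(v_i|S_{i-1})>0$. So the rejection branch is unreachable, and every $v_i$ enters either the ``add to $S$'' branch (line 3) or the ``add to $B$'' branch (line 5). That already gives $|S_n|+|B_n|=n$.

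Second, I would combine this identity with the standing assumption $n\ge k$ (otherwise a summary of cardinality $k$ is infeasible in one streaming pass) to conclude $|B_n|=n-|S_n|\ge k-|S_n|$, which is exactly the stated inequality. The hypothesis $\tau^+\ge\tau^-$ never appears in this count; it only ensures the two non-reject branches are consistent (an element added to $S$ also passes the buffer test), but does not affect $|S_n|+|B_n|$.

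There is no real obstacle here beyond a few sanity checks: (i) that strict positivity is indeed the global assumption on $f$, not just an artifact of a particular class of objectives; (ii) that the strict inequality ``$f(v_i|S)>\tau^-$'' in line 4 (as opposed to ``$\ge$'') is compatible with $\tau^-=0$ under strict positivity, which it is; and (iii) that stream indices are treated as distinct so $v_i\notin S_{i-1}$ holds even when the underlying elements are repeated in value. Once these are confirmed, the argument is essentially immediate, and the lemma should be read as a sanity check that $\tau^-=0$ is a safe default rather than a structural claim requiring submodularity.
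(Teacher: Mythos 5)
Your proof is correct and matches the paper's (implicit) reasoning: the paper states the lemma without a formal proof precisely because, under its standing assumption $f(v|A)>0$ for $v\in V\setminus A$, setting $\tau^-=0$ makes the rejection branch unreachable, so $|S_n|+|B_n|=n\geq k$ and the inequality follows by the same counting you give. Your sanity checks (strict positivity as the global assumption, the strict inequality in line 4, and distinctness of stream indices) are exactly the right points to verify, and none of them causes trouble.
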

When $\tau^-=0$, all the elements whose marginal gain is less than $\tau^+$ will be stored in the buffer, and may lead to a large $B_n$. Note the advanced version Algorithm~\ref{alg:StreamClipper} can start from $\tau^-=0$, and adaptively increase it and clean the buffer when $|B|$ exceeds the limit $b$. By following similar proof technique in \cite{greedyapprox},
we have the theorem below. Please refer to \cite{Supp} for its proof.
\begin{theorem}\label{the:k1Snbound}
	If submodular function $f(\cdot)$ is monotone non-decreasing and normalized, let $k_n\triangleq|(S^*\backslash S_n)\backslash B_n|$, the following result holds for the final output $S$ of Algorithm~\ref{alg:StreamBuffer}.
	\begin{equation}\label{equ:k1Snbound}
	f(S_{sc})\geq \left(1-e^{-\frac{k-|S_n|}{k-k_n}}\right)\left(f(S^*)-k_n\tau^-\right)+e^{-\frac{k-|S_n|}{k-k_n}}|S_n|\tau^+,
	\end{equation}
\end{theorem}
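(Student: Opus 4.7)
The plan is to adapt the classical Nemhauser--Wolsey--Fisher greedy analysis to the two-phase structure of Algorithm~\ref{alg:StreamBuffer}: the streaming pass that produces $S_n$ and $B_n$, followed by the greedy completion phase that draws from $B_n$. Without loss of generality I assume $|S^*|=k$ (by monotonicity). Let $m=|S_n|$ and $\ell=k-m$. I write $S'_0 = S_n$ and, for $j\geq 1$, set $S'_j = S'_{j-1}\cup\{v'_j\}$ with $v'_j\in\argmax_{v\in B_n\backslash S'_{j-1}} f(v|S'_{j-1})$, so that $S_{sc}=S'_\ell$.

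The second step is to harvest what the two thresholds tell us. For any $v\in(S^*\backslash S_n)\backslash B_n$, the element was rejected in the stream with $f(v|S)\leq\tau^-$ for the then-current $S\subseteq S_n\subseteq S'_j$; submodularity therefore gives $f(v|S'_j)\leq\tau^-$ for every $j$. On the selection side, each of the $m$ elements of $S_n$ was accepted with marginal gain at least $\tau^+$ against the solution at the moment it was added, so telescoping yields $f(S_n)\geq m\tau^+$.

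The third step sets up a one-step recurrence. Let $c_j = f(S^*)-f(S'_j)$. Monotonicity of $f$, combined with the chain rule and submodularity, gives
\begin{equation*}
c_j \;\leq\; \sum_{v\in(S^*\backslash S'_j)\cap B_n} f(v|S'_j) \;+\; \sum_{v\in(S^*\backslash S_n)\backslash B_n} f(v|S'_j),
\end{equation*}
where I used $S_n\subseteq S'_j$ to eliminate the contribution of $S^*\cap S_n$. The first sum has at most $|(S^*\backslash S_n)\cap B_n|=|S^*\backslash S_n|-k_n\leq k-k_n$ terms, each dominated by the greedy choice $f(v'_{j+1}|S'_j)=f(S'_{j+1})-f(S'_j)$; the second sum is at most $k_n\tau^-$ by the first bullet above. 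Hence $c_j-k_n\tau^- \leq (k-k_n)(c_j-c_{j+1})$.

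Finally, setting $d_j=c_j-k_n\tau^-$, the recurrence becomes $d_{j+1}\leq(1-1/(k-k_n))d_j$, so $d_\ell\leq e^{-\ell/(k-k_n)}d_0$. Unfolding the definition of $d_j$ and then substituting $f(S_n)\geq m\tau^+$ at the very end yields (\ref{equ:k1Snbound}). The main obstacle I expect is the careful counting behind $|(S^*\backslash S'_j)\cap B_n|\leq k-k_n$, which rests on $S_n\subseteq S'_j$ forcing such $v$ to lie in $S^*\backslash S_n$, and on deferring the substitution $f(S_n)\mapsto m\tau^+$ until the very last step so that the intermediate recurrence stays tight.
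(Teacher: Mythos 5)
Your proposal follows essentially the same route as the paper's proof: the same threshold-derived facts ($f(v|S'_j)\leq\tau^-$ for rejected optimal elements by submodularity, and $f(S_n)\geq|S_n|\tau^+$ by telescoping the acceptance rule), the same split of $\sum_{v\in S^*\backslash S'_j}f(v|S'_j)$ into the at most $k-k_n$ buffered terms dominated by the greedy gain and the at most $k_n$ rejected terms dominated by $\tau^-$, and the same recurrence $d_{j+1}\leq\left(1-\frac{1}{k-k_n}\right)d_j$ iterated $k-|S_n|$ times. The only substantive difference is that the paper explicitly disposes of the degenerate cases ($k-k_n=0$, or $\delta_j-\delta_{j+1}\leq 0$, or $d_0<0$, where the division or the exponential bound cannot be applied directly but the stated convex-combination bound still holds because $f(S_{sc})\geq f(S_n)$), which you should add; otherwise the argument is the same.
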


The bound in (\ref{equ:k1Snbound}) is a convex combination of $f(S^*)-k_n\tau^-$ and $|S_n|\tau^+$. It depends on $k_n$, $S_n$, $\tau^+$, $\tau^-$ and $f(S^*)$: $|S_n|$ is known once a summary is requested; thresholds $\tau^+$ and $\tau^-$ are pre-defined parameters; $f(S^*)$ is the optimum we need to compare to. However, $k_n$ is the number of elements from optimal set $S^*$ that have been rejected by A\ref{alg:StreamBuffer}.L7. It depends on $S^*$ that may not be known. In order to remove the dependency on $k_n$, we take the minimum of the right hand side of (\ref{equ:k1Snbound}) over all possible values of $k_n\in[0, k]$. We use $g(k_n)$ to denote the right hand side of (\ref{equ:k1Snbound}),
\begin{equation}
g(k_n) = \left(1-e^{-\frac{k-|S_n|}{k-k_n}}\right)\left(f(S^*)-k_n\tau^-\right)+e^{-\frac{k-|S_n|}{k-k_n}}|S_n|\tau^+.
\end{equation}
Since $g(k_n)$ has a complex shape, we firstly study its first and second order derivatives.

\begin{lemma}\label{lemma:derivative}
	The derivative and second order derivative of $g(k_n)$ are
	\begin{equation}\label{equ:derivative}
	\begin{array}{ll}
	\frac{\partial g}{\partial k_n}=&\left[e^{-\frac{k-|S_n|}{k-k_n}}\left(1-\frac{k_n(k-|S_n|)}{(k-k_n)^2}\right)-1\right]\tau^-+\\&e^{-\frac{k-|S_n|}{k-k_n}}\frac{k-|S_n|}{(k-k_n)^2}\left[f(S^*)-|S_n|\tau^+\right],
	\end{array}
	\end{equation}
	and
	\begin{equation}\label{equ:2derivative}
	\begin{array}{ll}
	\frac{\partial^2 g}{\partial k_n^2}=&e^{-\frac{k-|S_n|}{k-k_n}}\frac{k-|S_n|}{(k-k_n)^2}\left[(k-2k_n+|S_n|)\right.\\
	&\left.(f(S^*)-|S_n|\tau^+)-(2k^2-3kk_n+|S_n|k_n)\tau^-\right].
	\end{array}
	\end{equation}
\end{lemma}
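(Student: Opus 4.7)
The statement is a pure calculation, so the proof will be mechanical: apply the chain and product rules to $g(k_n)$ and then collect terms. To keep the notation manageable I would first introduce shorthand $a \triangleq k - |S_n|$ and $x \triangleq k - k_n$ (so $dx/dk_n = -1$), under which
$$g(k_n) = (1 - e^{-a/x})(f(S^*) - k_n\tau^-) + e^{-a/x}|S_n|\tau^+.$$
The $k_n$-dependence enters only through the factor $e^{-a/x}$ and the explicit $-k_n\tau^-$, and a one-line chain-rule computation gives $\frac{d}{dk_n} e^{-a/x} = -\frac{a}{x^2} e^{-a/x}$.

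For the first derivative I would apply the product rule directly to obtain
$$g'(k_n) = \frac{a}{x^2} e^{-a/x}\bigl[f(S^*) - k_n\tau^- - |S_n|\tau^+\bigr] - (1-e^{-a/x})\tau^-,$$
and then regroup the right-hand side into one coefficient of $\tau^-$ (namely $e^{-a/x}(1 - k_n a/x^2) - 1$) and one coefficient of $f(S^*) - |S_n|\tau^+$. Reverting the substitution $a = k - |S_n|$, $x = k - k_n$ then recovers exactly the form stated in (\ref{equ:derivative}).

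For the second derivative I would differentiate $g'$ piece by piece. The only nontrivial sub-computation is
$$\frac{d}{dk_n}\!\left[\frac{a}{x^2} e^{-a/x}\right] = \frac{a(2x - a)}{x^4}\, e^{-a/x},$$
obtained by another chain$+$product rule application; the remaining terms contribute additional $\tau^-$-coefficients from differentiating $-k_n\tau^-$ and $-(1-e^{-a/x})\tau^-$. Factoring out the common prefactor $\frac{a}{x^4} e^{-a/x}$, the claim reduces to verifying the two polynomial identities $2x - a = k - 2k_n + |S_n|$ and $k_n(2x - a) + 2x^2 = 2k^2 - 3kk_n + |S_n|k_n$ under the substitutions for $a$ and $x$.

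The main obstacle is not any single derivative but the algebraic simplification of the $\tau^-$ coefficient in $g''$: three contributions (from the exponential prefactor's derivative, from $-k_n\tau^-$ inside the bracket, and from the tail $-(1-e^{-a/x})\tau^-$) each carry different signs and different powers of $x$, and it is easy to drop a factor of $2$ or mis-track a sign. I would carry out this simplification symbolically, then sanity-check by evaluating a boundary case such as $|S_n|=0$ (so $a=k$), where $g$ collapses to a simpler expression whose derivatives can be recomputed independently and matched against the stated formula.
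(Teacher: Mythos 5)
Your method is the only reasonable one here, and it is in fact the paper's implicit approach: no proof of this lemma appears in the appendix, it is treated as routine calculus. Your substitution $a=k-|S_n|$, $x=k-k_n$ with $\frac{d}{dk_n}e^{-a/x}=-\frac{a}{x^2}e^{-a/x}$ reproduces the first derivative (\ref{equ:derivative}) exactly, and the two polynomial identities you isolate, $2x-a=k-2k_n+|S_n|$ and $k_n(2x-a)+2x^2=2k^2-3kk_n+|S_n|k_n$, are both correct.

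There is, however, a mismatch you have glossed over. Carrying your computation through gives
\[
g''(k_n)=\frac{a}{x^{4}}\,e^{-a/x}\Bigl[(2x-a)\bigl(f(S^*)-|S_n|\tau^+\bigr)-\bigl(k_n(2x-a)+2x^2\bigr)\tau^-\Bigr],
\]
i.e.\ prefactor $e^{-\frac{k-|S_n|}{k-k_n}}\frac{k-|S_n|}{(k-k_n)^{4}}$, whereas (\ref{equ:2derivative}) states the prefactor $e^{-\frac{k-|S_n|}{k-k_n}}\frac{k-|S_n|}{(k-k_n)^{2}}$. These differ by $(k-k_n)^2$, so the claim does \emph{not} ``reduce to verifying the two polynomial identities''---the prefactors must also agree, and they do not. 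The sanity check you yourself propose settles it: with $|S_n|=0$ and $\tau^-=\tau^+=0$, $g(k_n)=\bigl(1-e^{-k/(k-k_n)}\bigr)f(S^*)$, and direct differentiation gives $g''(k_n)=\frac{k(k-2k_n)}{(k-k_n)^4}e^{-k/(k-k_n)}f(S^*)$, hence $g''(0)=e^{-1}f(S^*)/k^2$; this matches the $(k-k_n)^{-4}$ version and not the stated $(k-k_n)^{-2}$ version (which would give $e^{-1}f(S^*)$). So your derivation is right and the stated (\ref{equ:2derivative}) contains an error in the power of $(k-k_n)$; you should flag this explicitly rather than present your calculation as verifying the statement as written. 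The discrepancy is harmless downstream, since Proposition~\ref{prop:boundmin} uses only the sign of $g''$ via the factorization $g''=u\cdot v$ with $u>0$, and both candidate prefactors are positive for $k_n\in[0,k)$.
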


\begin{proposition}\label{prop:boundmin}
	When $f(S^*)\geq k_n\tau^-+|S_n|\tau^+$, the minimum value $\min_{k_n\in[0, k]}g(k_n)$ of the bound $g(k_n)$ given in (\ref{equ:k1Snbound}) w.r.t. $k_n$ is either $g(k) = f(S^*)-k\tau^-$, or
	$g(0) = \left(1-e^{-1+\frac{|S_n|}{k}}\right)f(S^*)+e^{-1+\frac{|S_n|}{k}}|S_n|\tau^+$.
\end{proposition}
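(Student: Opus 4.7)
The plan is to show that $g$, viewed as a function of $k_n$ on $[0,k]$, attains its minimum at a boundary point, so that minimum equals either $g(0)$ or $g(k)$; the two displayed values then come from direct substitution into the definition of $g$, using that $e^{-(k-|S_n|)/(k-k_n)} \to 0$ as $k_n \to k^{-}$.

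I would first invoke Lemma~\ref{lemma:derivative}: $\partial^2 g/\partial k_n^2$ equals a strictly positive prefactor times the quantity
\[
h(k_n) \triangleq (k - 2k_n + |S_n|)(f(S^*) - |S_n|\tau^+) - (2k^2 - 3k k_n + |S_n| k_n)\tau^-,
\]
which is affine in $k_n$. Hence $g''$ changes sign at most once on $[0,k]$, so $g$ is globally concave, globally convex, convex-then-concave, or concave-then-convex. A short rearrangement yields $h(k) = (|S_n|-k)\bigl(f(S^*) - |S_n|\tau^+ - k\tau^-\bigr)$, which is nonpositive because $|S_n|\leq k$ and the hypothesis applied at $k_n=k$ gives the bracket nonnegative. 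Thus $g$ is concave near $k_n=k$, ruling out the globally convex and concave-then-convex shapes. Only two possibilities remain: $g$ globally concave on $[0,k]$, or $g$ convex on $[0,k_n^*]$ and concave on $[k_n^*,k]$ for a unique $k_n^*\in(0,k)$.

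The globally concave case is immediate, so the main obstacle is ruling out an interior local minimum inside the convex region $[0,k_n^*]$ in the remaining case. The plan is to show that whenever $h(0)>0$ one automatically has $g'(0)\geq 0$. Writing $s=|S_n|/k$, $c=e^{s-1}$, and $a=f(S^*)-|S_n|\tau^+$, the hypothesis $h(0)>0$ rearranges to $a>2k^2\tau^-/(k+|S_n|)$ while $g'(0)\geq 0$ rearranges to $a\geq (1-c)k^2\tau^-/\bigl[c(k-|S_n|)\bigr]$, and the implication reduces to the single-variable inequality
\[
e^{s-1}(3 - s) \geq 1 + s, \qquad s \in [0, 1].
\]
I would verify this by noting that its second derivative $e^{s-1}(1-s)$ is nonnegative on $[0,1]$ and its first derivative $e^{s-1}(2-s)-1$ vanishes at $s=1$, so the first derivative is nonpositive throughout, making the difference decrease from $3/e - 1 > 0$ at $s=0$ down to $0$ at $s=1$. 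Granted $g'(0)\geq 0$, the inequality $g''\geq 0$ on $[0,k_n^*]$ makes $g'$ nondecreasing and nonnegative there, so $g$ is nondecreasing on $[0,k_n^*]$ with no interior local minimum; on the concave region $[k_n^*,k]$ any interior critical point is a local maximum. Hence the minimum of $g$ on $[0,k]$ must be attained at an endpoint, and substituting $k_n=0$ and $k_n=k$ recovers the two stated candidate values.
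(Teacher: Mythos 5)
Your proposal is correct and follows essentially the same route as the paper's proof: factor $g''$ as a positive prefactor times an affine function, observe that the affine part is nonpositive at $k_n=k$, split on its sign at $k_n=0$, and in the convex-then-concave case show $g'(0)\geq 0$ so that no interior minimum can occur. Your reduction to the single-variable inequality $e^{s-1}(3-s)\geq 1+s$ on $[0,1]$ is an equivalent reformulation of the paper's claim $h(|S_n|)\geq h(k)=0$, and your convexity-of-the-derivative verification of it is in fact tighter than the paper's endpoint-comparison argument.
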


By using Proposition \ref{prop:boundmin}, we can derive the minimum value of $g(k_n)$ in three different cases, which corresponds to three ranges of $f(S^*)$ determined by $\tau^+$, $\tau^-$ and $|S_n|$. This leads to the following theorem.

\begin{theorem}\label{the:Snbound}
	Under the assumptions of Theorem \ref{the:k1Snbound}, we have
	
	Case 1: when $f(S^*)\leq k_n\tau^-+|S_n|\tau^+$,
	\begin{equation}\label{equ:bound1}
	f(S_{sc})\geq |S_n|\tau^+\geq \frac{|S_n|}{k}\times \frac{f(S^*)}{2}.
	\end{equation}
	Case 2: when $k_n\tau^-+|S_n|\tau^+< f(S^*)< e^{1-\frac{|S_n|}{k}}k\tau^-+|S_n|\tau^+$,
	\begin{equation}\label{equ:bound2}
	f(S_{sc})\geq f(S^*)-k\tau^-.
	\end{equation}
	Case 3: when $f(S^*)\geq e^{1-\frac{|S_n|}{k}}k\tau^-+|S_n|\tau^+$,
	\begin{equation}\label{equ:bound3}
	f(S_{sc})\geq \left(1-e^{-1+\frac{|S_n|}{k}}\right)f(S^*)+e^{-1+\frac{|S_n|}{k}}|S_n|\tau^+.
	\end{equation}
\end{theorem}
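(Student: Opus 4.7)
The plan is to combine the data-dependent bound $f(S_{sc})\geq g(k_n)$ of Theorem~\ref{the:k1Snbound} with the endpoint characterization of $\min_{k_n\in[0,k]} g(k_n)$ given by Proposition~\ref{prop:boundmin}, and then to partition the parameter space into three regimes according to which of the two candidate values $g(0)$ and $g(k)$ is the smaller one.

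Concretely, substituting $k_n=0$ and $k_n=k$ into the definition of $g$ produces
\begin{align*}
g(0) &= \left(1-e^{-1+|S_n|/k}\right)f(S^*)+e^{-1+|S_n|/k}|S_n|\tau^+,\\
g(k) &= f(S^*)-k\tau^-.
\end{align*}
A short manipulation (cancel the common $f(S^*)$ term, divide through by $e^{-1+|S_n|/k}$) shows
\[
g(0)\geq g(k) \;\iff\; f(S^*)\leq e^{1-|S_n|/k}\,k\tau^-+|S_n|\tau^+,
\]
and this is exactly the cutoff separating Case 2 from Case 3 in the theorem statement.

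For Case 1, Proposition~\ref{prop:boundmin} is not applicable, so I sidestep $g(k_n)$ altogether and instead use a direct telescoping argument: each element $v_{s_i}$ added in A\ref{alg:StreamBuffer}.L3 contributes marginal gain at least $\tau^+$, so $f(S_n)\geq |S_n|\tau^+$, and by monotonicity with $S_{sc}\supseteq S_n$ we obtain the first half of (\ref{equ:bound1}). The second half, $|S_n|\tau^+\geq \tfrac{|S_n|}{k}\cdot\tfrac{f(S^*)}{2}$, is just the standard threshold choice $\tau^+\geq f(S^*)/(2k)$ baked into Algorithm~\ref{alg:StreamClipper}.

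Cases 2 and 3 are then immediate from Proposition~\ref{prop:boundmin} combined with the $g(0)$-vs-$g(k)$ comparison above: under Case 2's hypothesis $k_n\tau^-+|S_n|\tau^+<f(S^*)<e^{1-|S_n|/k}k\tau^-+|S_n|\tau^+$, Proposition~\ref{prop:boundmin} applies and the minimum is forced to $g(k)=f(S^*)-k\tau^-$, yielding (\ref{equ:bound2}); under Case 3's reversed inequality the minimum becomes $g(0)$, yielding (\ref{equ:bound3}). The only non-routine step is the algebraic comparison between $g(0)$ and $g(k)$, but it collapses cleanly once the common $f(S^*)$ term is removed; everything else is just bookkeeping about which of the two endpoints wins in each case.
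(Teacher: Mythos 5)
Your route is the same as the paper's: Case~1 is handled directly via $f(S_{sc})\geq f(S_n)\geq |S_n|\tau^+$ (the telescoping/selection-rule bound, bypassing $g$ entirely since Proposition~\ref{prop:boundmin} does not apply there), and Cases~2 and~3 follow by pushing Theorem~\ref{the:k1Snbound}'s bound through Proposition~\ref{prop:boundmin} and deciding which endpoint wins. Your algebraic observation that $g(0)\geq g(k)$ is equivalent to $f(S^*)\leq e^{1-|S_n|/k}k\tau^-+|S_n|\tau^+$ is precisely the rearrangement the paper uses to separate Cases~2 and~3, so that part is correct and complete.

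The one step that does not stand as you justified it is the second inequality of (\ref{equ:bound1}). You attribute $\tau^+\geq f(S^*)/(2k)$ to a ``standard threshold choice baked into Algorithm~\ref{alg:StreamClipper}.'' But Theorem~\ref{the:Snbound} is a statement about Algorithm~\ref{alg:StreamBuffer}, where $\tau^+$ is an arbitrary input with no a priori relation to $f(S^*)$; and even in Algorithm~\ref{alg:StreamClipper} the initialization uses only an estimate $\hat f(S^*)$, which need not lower-bound the true optimum. The inequality must instead be extracted from the Case~1 hypothesis itself: since $k_n\leq k$, $|S_n|\leq k$, and $\tau^-\leq\tau^+$,
\[
f(S^*)\leq k_n\tau^-+|S_n|\tau^+\leq k(\tau^-+\tau^+)\leq 2k\tau^+,
\]
hence $\tau^+\geq f(S^*)/(2k)$, which combined with $f(S_{sc})\geq |S_n|\tau^+$ gives the claimed $\frac{|S_n|}{k}\cdot\frac{f(S^*)}{2}$. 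With that one substitution your argument matches the paper's proof in full.
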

\textbf{Remarks:} In case 1, when $\tau^-=\tau^+=f(S^*)/(2k)$, buffer $B=\emptyset$ and Algorithm~\ref{alg:StreamBuffer} reduces to sieve-streaming \cite{Badanidiyuru}, so the bound is $(1/2)f(S^*)$. In the following corollary, we further show in cases 2 \& 3, better (i.e., $\geq 1/2$) bounds can be achieved when $|S_n|<k$, since the greedy algorithm in the end of Algorithm~\ref{alg:StreamBuffer} further takes advantage of elements from buffer $B$.

\begin{corollary}\label{cor:bestbound}
	Under the assumptions of Theorem \ref{the:k1Snbound}, when $f(S^*)\leq k_n\tau^-+|S_n|\tau^+$ (case-1), if $\tau^-\leq \frac{f(S^*)}{2k_n}$ and $\tau^+\geq \tau^-$, $f(S_{sc})\geq f(S^*) /2$. When $f(S^*)> k_n\tau^-+|S_n|\tau^+$ (case-2\&3), if $|S_n|=0$, $f(S_{sc})\geq \left(1-e^{-1}\right)f(S^*)$.
\end{corollary}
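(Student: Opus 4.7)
The plan is to derive both assertions as direct specializations of Theorem~\ref{the:Snbound}; no new submodular argument is needed, so the proof reduces to substituting the extra hypotheses into the appropriate case of the piecewise bound and simplifying.

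For the Case~1 half, I would start from the Case~1 bound $f(S_{sc})\geq |S_n|\tau^+$ of Theorem~\ref{the:Snbound}. The defining Case~1 inequality $f(S^*)\leq k_n\tau^-+|S_n|\tau^+$ rearranges to $|S_n|\tau^+\geq f(S^*)-k_n\tau^-$; the hypothesis $\tau^-\leq f(S^*)/(2k_n)$ bounds $k_n\tau^-$ by $f(S^*)/2$, so $|S_n|\tau^+\geq f(S^*)/2$, and chaining the two inequalities yields $f(S_{sc})\geq f(S^*)/2$. The side condition $\tau^+\geq\tau^-$ enters only to keep Algorithm~\ref{alg:StreamBuffer} well defined and to ensure via Lemma~\ref{lemma:lt0} that the buffer holds enough candidates for the greedy cleanup to fill $S$ to $k$ elements.

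For the second half, I would substitute $|S_n|=0$ directly into the Case~3 bound of Theorem~\ref{the:Snbound}: the exponent $e^{-1+|S_n|/k}$ collapses to $e^{-1}$ and the additive term $|S_n|\tau^+$ vanishes, leaving $(1-1/e)f(S^*)$ as claimed.

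The step I expect to be the main obstacle is reconciling the claim with the Case~2 subregion of the condition $f(S^*)>k_n\tau^-+|S_n|\tau^+$. When $|S_n|=0$, Theorem~\ref{the:Snbound} in Case~2 gives only $f(S_{sc})\geq f(S^*)-k\tau^-$, and this fails to dominate $(1-1/e)f(S^*)$ unless $\tau^-\leq f(S^*)/(ek)$. I would argue that the empty-solution regime $|S_n|=0$ is precisely the one in which $\tau^-$ is small enough for Case~3 to apply, so that the buffer retains near-optimal candidates and the final greedy pass on $B_n$ inherits Nemhauser's $(1-1/e)$ factor by the classical offline analysis. If a cleaner statement is preferred, I would tighten the hypothesis to the Case~3 inequality $f(S^*)\geq e k\tau^-$ so that the unfavorable Case~2 region is excluded outright.
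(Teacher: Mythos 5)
Your derivation matches the paper's for the two parts it actually nails down: Case~1 is exactly the chain $f(S_{sc})\geq|S_n|\tau^+\geq f(S^*)-k_n\tau^-\geq f(S^*)/2$ (the paper uses the hypothesis on $\tau^-$ to bound $k_n\tau^-$ by $f(S^*)/2$, just as you do), and the second half is the same substitution of $|S_n|=0$ into the Case-3 bound (\ref{equ:bound3}). The obstacle you flag in Case~2 is genuine, and the paper's own proof does not overcome it: the paper observes that the Case-2 bound $f(S^*)-k\tau^-$ is decreasing in $\tau^-$ and attains the Case-3 value only at the smallest admissible $\tau^-$, namely the Case-2/Case-3 boundary $\tau^-=(f(S^*)-|S_n|\tau^+)/(ke^{1-|S_n|/k})$. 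For $\tau^-$ strictly inside Case~2 --- which with $|S_n|=0$ means $f(S^*)/(ek)<\tau^-<f(S^*)/k_n$, a nonempty range --- the guaranteed bound is strictly below $(1-1/e)f(S^*)$, so the corollary's second claim is really about what is \emph{attainable}, consistent with the surrounding remark that the factor ``is possible to be $1-e^{-1}$.'' Your proposed repair, adding the Case-3 hypothesis $f(S^*)\geq ek\tau^-$ so the unfavorable Case-2 region is excluded, is the correct way to turn this into a uniform guarantee; your alternative heuristic that $|S_n|=0$ forces Case~3 does not hold, since $|S_n|=0$ only says no element cleared $\tau^+$ and places no upper bound on $\tau^-$.
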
 
According to Corollary \ref{cor:bestbound}, although the approximation factor is possible to be  $1-e^{-1}$ for cases 2 \& 3, the worst case bound is still $f(S^*)/2$. This obeys the $1/2$ hardness given in \cite{Buchbinder}, i.e., it is impossible to improve the worst-case bound over $f(S^*)/2$. However, the bound can be strictly better than $f(S^*)/2$ on specific orders of the same set of elements $v_{1:n}$. Given thresholds $\tau^-$ and $\tau^+$, for a data stream with a specific order and an $\alpha\in[0, 1/2]$, we give the conditions to justify whether stream clipper can achieve an approximation factor $1-\alpha$.



In the following analysis, we use $\sigma=(\sigma_1, \sigma_2, \cdots, \sigma_n)$, a sequence of $n$ distinct integers from $1$ to $n$, to denote the order of elements $v_{1:n}$ in the stream, i.e., $(v_{\sigma_1}, v_{\sigma_2}, \cdots, v_{\sigma_n})$. We use $\Sigma$ to represent the set of all $n!$ orders. By analyzing the three cases in Theorem \ref{the:Snbound}, we can locate $\tau^-$ and $\tau^+$ in specific ranges. In each range, we characterize the orders on which $f(S_{sc})\geq(1-\alpha)f(S^*)$ and buffer size is bounded by $b$, i.e., $|B_i|\leq b~ \forall i\in[n]$.

\begin{proposition}\label{the:combine23}
	1) For any $\alpha\in[0,1/e]$, given $\tau^-$ and $\tau^+\geq\tau^-$ to use in stream clipper (Algorithm~\ref{alg:StreamBuffer}), define $\notag m^{*}\triangleq\min M^{'}$, where
	\begin{align}\label{equ:caset1}
	M^{'}\triangleq\left\{m\in[n]:\tau^{+}\geq\frac{(1-e^{1-m/k}\alpha)f(S^{*})}{m},\tau^{-}\leq \frac{f(S^{*})-m\tau^+}{e^{1-m/k}k}\right\},
	\end{align}
	if $M^{'}\neq\emptyset$, for any order $\sigma\in\left\{\sigma\in\Sigma: |S_n|\geq m^*, |B_n|\leq b\right\}$, we have $f(S_{sc})\geq(1-\alpha)f(S^*)$.
	
	2) For any $\alpha\in(1/e,1/2]$, given $\tau^-$ and $\tau^+\geq\tau^-$ to use in stream clipper (Algorithm~\ref{alg:StreamBuffer}), define
	\begin{equation}\label{equ:caset3}
	\begin{array}{ll}
	&M_1\triangleq\left\{m\in[n]:\frac{(1-\alpha)f(S^*)}{k}\leq \tau^+\leq\frac{f(S^*)}{m+k}\right\},\\ &M_2\triangleq\left\{m\in[n]:\tau^-\leq \frac{f(S^*)-m\tau^+}{e^{1-m/k}k}\right\},
	\end{array}
	\end{equation}
	and
	\begin{align}\label{equ:case2t2} 
	\notag M\triangleq&\left\{m\in[n]:\frac{f(S^*)-e^{1-\frac{m}{k}}k\tau^-}{m}<\tau^+<\right.\\
	&~~~~~~~~~~~~~~~~~~~\left.\frac{f(S^*)-k\tau^-}{m}, \tau^-\leq \frac{\alpha f(S^*)}{k}\right\}
	\end{align} 
	for any order
	$\sigma\in\bigcup\limits_{m\in (M_1\cap M_2)\cup M}\left\{\sigma\in\Sigma: |S_n|=m, |B_n|\leq b\right\}$,
	we have $f(S_{sc})\geq(1-\alpha)f(S^*)$.
\end{proposition}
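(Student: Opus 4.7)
The plan is to apply Theorem~\ref{the:Snbound} case by case at the realized value of $|S_n|$, showing that in each of the three cases the associated lower bound on $f(S_{sc})$ exceeds $(1-\alpha)f(S^*)$. The conditions defining $M'$ in Part~1 and $M_1, M_2, M$ in Part~2 are engineered precisely so that this verification goes through. The auxiliary hypothesis $|B_n|\le b$ plays no role in the value bound; it is just the memory-budget constraint, and together with $|B_n|\geq k-|S_n|$ (cf.\ Lemma~\ref{lemma:lt0}) it ensures Algorithm~\ref{alg:StreamBuffer} actually outputs a set of size $k$.

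For Part~1 ($\alpha \in [0, 1/e]$), I would first observe that the two defining inequalities of $M'$ at $m^*$ combine to give the auxiliary bound $\tau^- \leq \alpha f(S^*)/k$: rearranging $\tau^+ \geq (1 - \alpha e^{1 - m^*/k})f(S^*)/m^*$ yields $f(S^*) - m^*\tau^+ \leq \alpha e^{1 - m^*/k} f(S^*)$, and inserting this into $\tau^- \leq (f(S^*) - m^*\tau^+)/(e^{1 - m^*/k} k)$ gives the claim. This immediately handles Cases~1 and~2 of Theorem~\ref{the:Snbound} at the realized $|S_n|$: in Case~1, $f(S_{sc}) \geq |S_n|\tau^+ \geq f(S^*) - k_n\tau^- \geq f(S^*) - k\tau^- \geq (1-\alpha)f(S^*)$ using the Case~1 hypothesis and $k_n\leq k$, and in Case~2 the bound $f(S^*) - k\tau^-$ is directly at least $(1-\alpha)f(S^*)$. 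For Case~3 I would show $\tau^+ \geq h_\alpha(|S_n|) := (1 - \alpha e^{1 - |S_n|/k})f(S^*)/|S_n|$, which together with the Case~3 formula gives the desired bound. This reduces to a monotonicity lemma: $h_\alpha$ is non-increasing on $(0, k]$ whenever $\alpha \leq 1/e$. A short computation shows $h_\alpha'(m) \leq 0$ iff $\alpha \leq \phi(m) := 1/[(1 + m/k)e^{1 - m/k}]$; another derivative check shows $\phi$ is non-decreasing on $[0, k]$ with $\phi(0) = 1/e$ and $\phi(k) = 1/2$, so $\alpha \leq 1/e \leq \phi(m)$ throughout, giving $h_\alpha(|S_n|) \leq h_\alpha(m^*) \leq \tau^+$ whenever $|S_n| \geq m^*$.

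Part~2 ($\alpha \in (1/e, 1/2]$) follows the same template but splits the regime. For $m \in M$, the two strict inequalities on $\tau^+$ place us outside Case~3 (via the first) and, at $k_n = k$, outside Case~1 (via the second), so Case~2 applies; the explicit assumption $\tau^- \leq \alpha f(S^*)/k$ then makes its bound $f(S^*) - k\tau^-$ at least $(1-\alpha)f(S^*)$, and the residual Case~1 subcase with $k_n<k$ is closed by the same chain as in Part~1. For $m \in M_1 \cap M_2$, $M_2$ coincides with the Case~3 hypothesis at $m$, so the operative bound is $F(m) := (1 - e^{-1 + m/k})f(S^*) + e^{-1 + m/k} m \tau^+$; one then verifies $F(m) \geq (1-\alpha)f(S^*)$ algebraically, using the lower bound $\tau^+ \geq (1-\alpha)f(S^*)/k$ from $M_1$ (which gives the worst case since $F$ is linearly increasing in $\tau^+$) and the upper bound $\tau^+ \leq f(S^*)/(m+k)$, the latter controlling the sign of $F'(m) \propto (m+k)\tau^+ - f(S^*)$.

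The main obstacle I anticipate is the monotonicity lemma for $h_\alpha$ in Part~1: it is the leap from a pointwise condition at $m^*$ to a uniform bound over all $|S_n| \geq m^*$, and it hinges delicately on $\alpha \leq 1/e$ via the identity $\phi(0) = 1/e$. In Part~2 the analogous monotonicity fails, so the direct algebraic verification of $F(m) \geq (1-\alpha)f(S^*)$ from the two-sided $M_1$ interval (coupled with the Case~3 guarantee from $M_2$) becomes the most delicate bookkeeping in the argument.
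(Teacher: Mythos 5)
Your Part~1 and the $M$-branch of Part~2 are sound and follow essentially the paper's decomposition: the paper proves the proposition by deriving one corollary from each case of Theorem~\ref{the:Snbound} and then showing the Case-2 orders are subsumed by the Case-3 orders when $\alpha\le 1/e$. Your Part~1 is a slightly different and in fact more robust packaging: the paper forces the Case-3 hypothesis to hold at every $|S_n|\ge m^*$ via a monotonicity claim for $(f(S^*)-m\tau^+)/(e^{1-m/k}k)$ (whose derivative the paper miscomputes --- the correct numerator is $f(S^*)-(m+k)\tau^+$, not $(k-m)\tau^++f(S^*)$, so it need not be positive), whereas you derive $\tau^-\le\alpha f(S^*)/k$ from the two $M'$ conditions at $m^*$ and then check all three cases exhaustively, needing only the monotonicity of $h_\alpha$; your $\phi(0)=1/e$ computation is the paper's $1+x\le e^x$ step in disguise. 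The $M$-branch argument matches the paper's Case-2 corollary and is fine.

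The genuine gap is in the $M_1\cap M_2$ branch of Part~2, exactly the step you label ``delicate bookkeeping'' and then assert can be verified: it cannot, at least not from the stated hypotheses. The target $F(m)\ge(1-\alpha)f(S^*)$ is equivalent to $\tau^+\ge(1-\alpha e^{1-m/k})f(S^*)/m$, and the $M_1$ window does not imply this. Take $\alpha=0.4$, $k=10$, $m=6$, $f(S^*)=1$: $M_1$ forces $\tau^+\in[0.06,\,0.0625]$, while the required threshold is $(1-0.4e^{0.4})/6\approx 0.0672$; with $\tau^+=0.06$ and $\tau^-=0.01$ one checks $m\in M_1\cap M_2$, the Case-3 hypothesis holds, yet $F(m)\approx 0.571<0.6=(1-\alpha)f(S^*)$, and since this is the minimizing value $g(0)$ from Proposition~\ref{prop:boundmin}, Theorem~\ref{the:k1Snbound} yields no better guarantee. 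The monotonicity route fares no better: $F'(s)\propto(s+k)\tau^+-f(S^*)$, so $\tau^+\le f(S^*)/(m+k)$ only makes $F$ non-increasing at the left endpoint $s=m$; to conclude $F(m)\ge F(k)=k\tau^+$ you would need $\tau^+\le f(S^*)/(2k)$, which $M_1$ does not supply. To be fair, this is precisely the slip in the paper's own Corollary for Case~3, part~2 (it writes the derivative with $m+k$ where $|S_n|+k$ belongs), so you have faithfully reproduced the published argument --- but the deferred verification does not go through, and this branch needs either a strengthened hypothesis (e.g., $\tau^+\ge(1-\alpha e^{1-m/k})f(S^*)/m$ directly, or $\tau^+\le f(S^*)/(2k)$ in place of the $M_1$ upper bound) or a different argument.
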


The detailed proof is given in \cite{Supp}. In the advanced version of stream clipper, we can adjust $\tau^-$ and $\tau^+$ to guarantee an nonempty $M^{'}$. The conditions in (\ref{equ:caset1}) can provide some clues of how to adjust them based on the updated estimate of $f(S^*)$. According to Proposition \ref{the:combine23}, given $\tau^-$, $\tau^+\geq\tau^-$, and any $\alpha\in[0,1/2]$, for the orders on which stream clipper achieves 1) $|S_n|\geq m^*$ and $|B_n|\leq b$ when $\alpha\in[0,1/e]$, or 2) $|S_n|=m$ and $|B_n|\leq b$ for every $m\in (M_1\cap M_2)\cup M$ when $\alpha\in(1/e,1/2]$, we have $f(S_{sc})\geq(1-\alpha)f(S^*)$ with $(1-\alpha)\in[1-1/e, 1]$.

\textbf{Remarks:} We can easily extend the above analysis of Algorithm~\ref{alg:StreamBuffer} to Algorithm~\ref{alg:StreamClipper} by replacing $\tau^-$ and $\tau^+$ in them with $\tau_n^-$ and $\tau_n^+$ (the thresholds after step $n$) respectively. Details are given in \cite{Supp}.

\begin{figure}
	\begin{center}
		\includegraphics[width=1\linewidth]{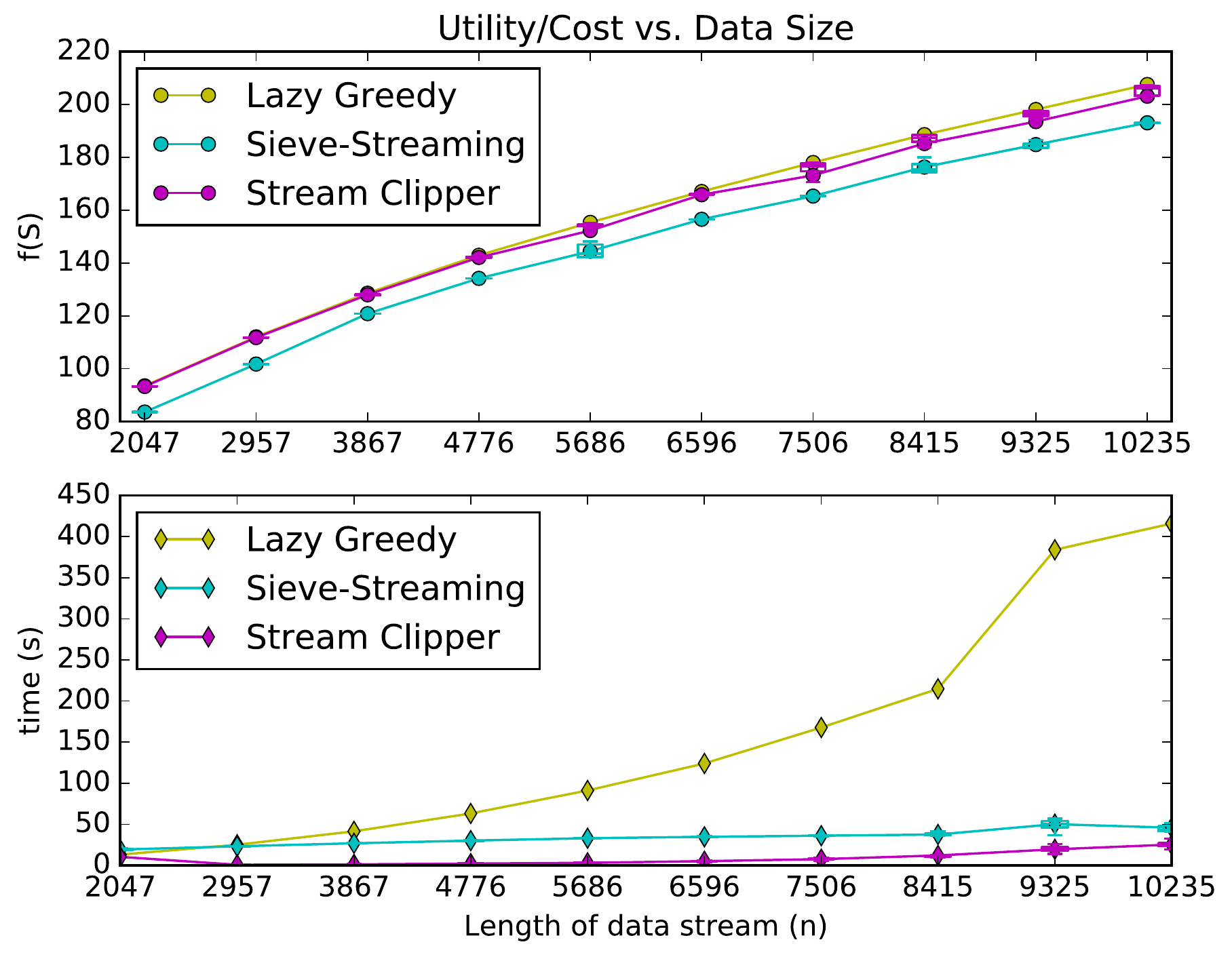}
	\end{center}
	\caption{Utility $f(S)$ and time cost vs. length of data stream $n$ on the same data of $20$ different random orders. Stream clipper achieves similar utility as offline greedy, but has computational costs similar to other streaming algorithms (i.e., much less than the offline greedy).}
	\label{fig:nchange_box}
\end{figure}

\begin{figure}
	\begin{center}
		\includegraphics[width=1\linewidth]{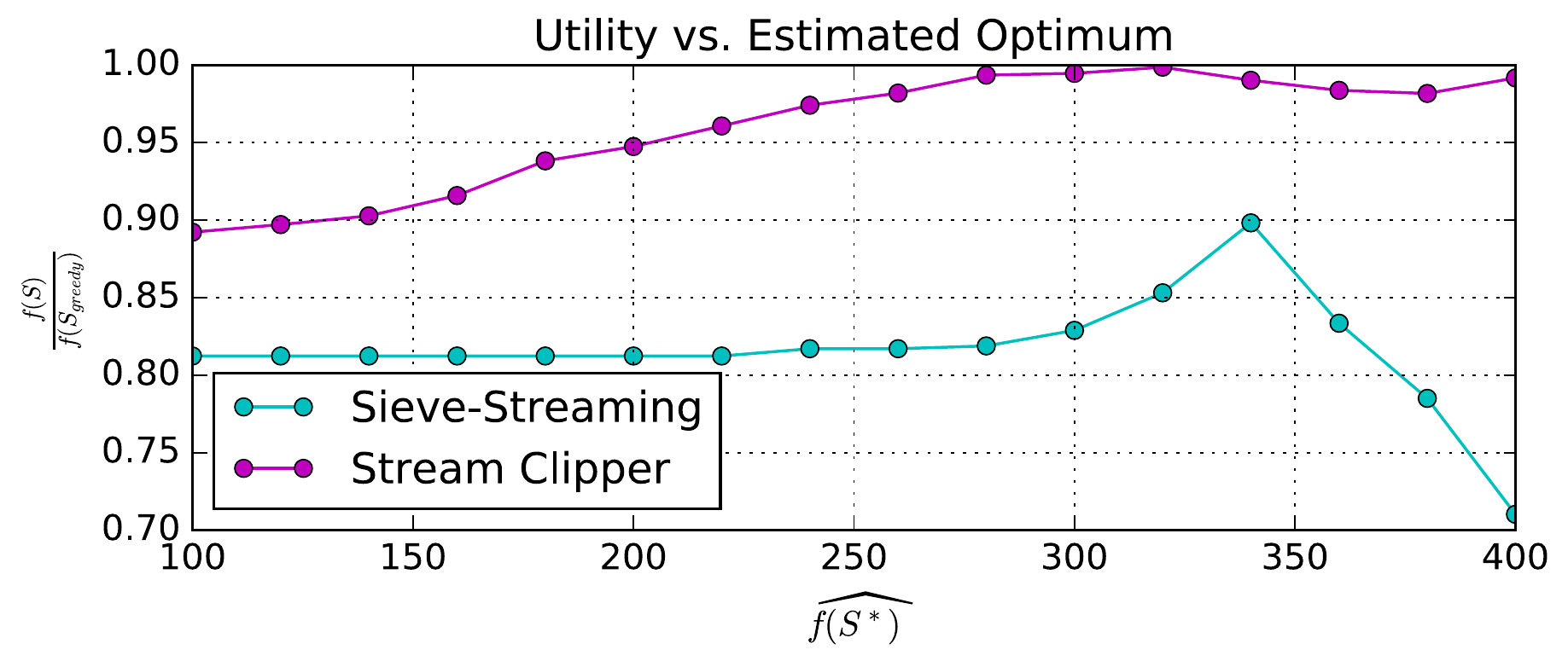}
	\end{center}
	\caption{Relative utility $f(S)/f(S_{greedy})$ vs. $\hat f(S^*)$. Different from sieve-streaming, stream clipper does not heavily rely on an accurate estimate $\hat f(S^*)$ to guarantee a large utility, because it can adaptively tune the thresholds on the fly.}
	\label{fig:OPTchange}
\end{figure}

\begin{figure}[tp]
	\begin{center}
		\includegraphics[width=1\linewidth]{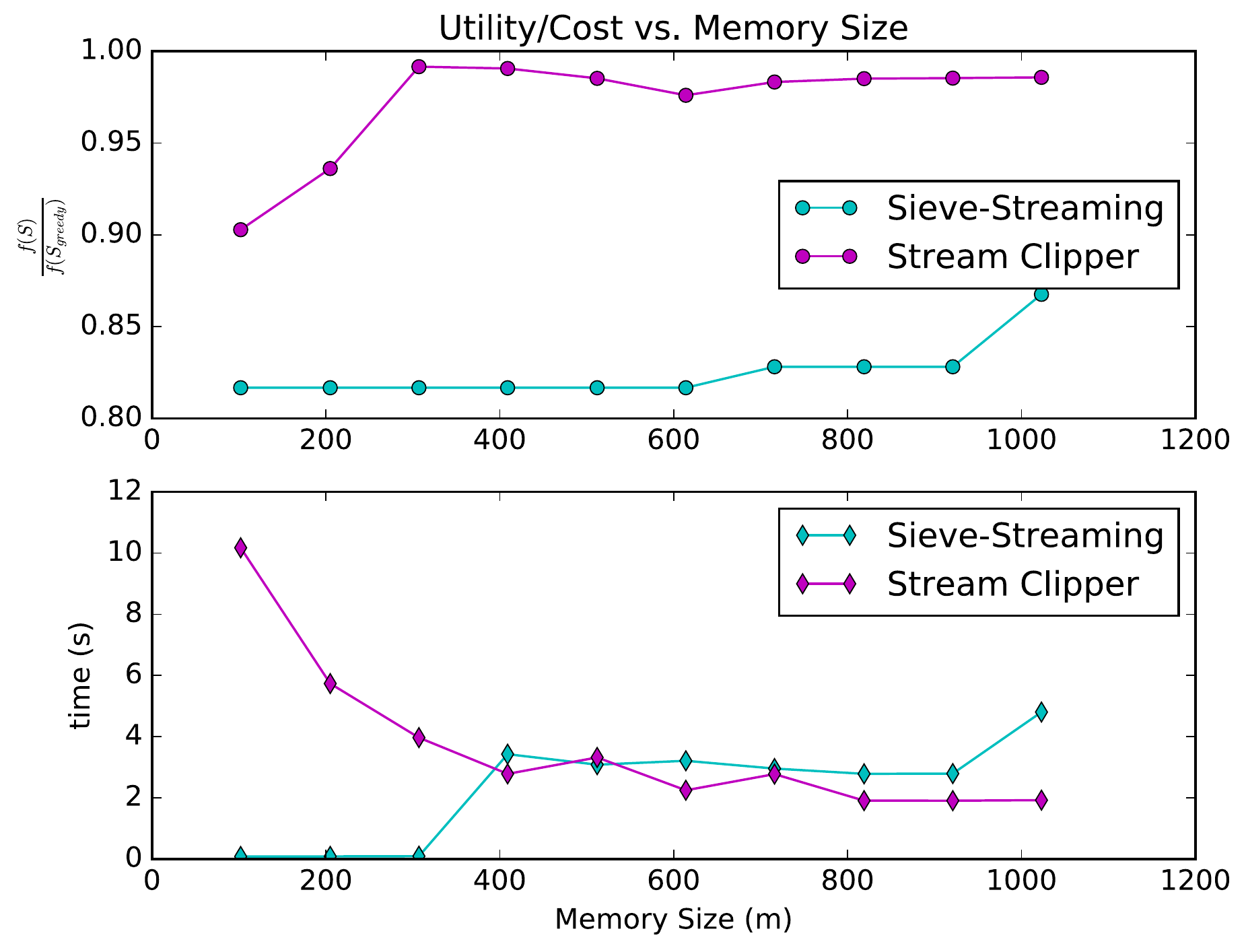}
	\end{center}
	\caption{Relative utility $f(S)/f(S_{greedy})$ and time cost vs.\ memory size $m$. It shows the advantage of stream clipper via the trade-off between memory usage and utility value. Stream clipper needs only to buffer $300$ sentences out of the $10235$ in the whole stream to obtain almost the same utility as the offline greedy procedure, while its time cost is similar to the other streaming algorithm. }
	\label{fig:mchange}
\end{figure}

\section{Experiments}\label{sec:exp}

In this section, on several news and video datasets, we compare summaries generated by stream clipper and other algorithms. We use the feature based submodular function \cite{featurebased} $f(S)=\sum_{u\in \mathcal U}\sqrt{c_u(S)}$ as our objective, where $\mathcal U$ is a set of features, and $c_u(S)=\sum_{v\in S} \omega_{v, u}$ is a modular score ($\omega_{v, u}$ is the affinity of element $v$ to feature $u$). This function typically achieves good performance on summarization tasks. Our baseline algorithms are the lazy greedy approach~\cite{lazygreedy} (which has identical output as greedy but is faster) and the ``sieve-streaming''~\cite{Badanidiyuru} approach for streaming submodular maximization, which has low memory requirements as it takes one pass over the data. Note in summarization experiments, a difference of $10\%$ on utility usually leads to large gap on rouge-2 and F1-score.


\subsection{Empirical Study on News}

An empirical study is conducted on a ground set containing sentences from all NYT articles on a randomly selected date between 1996 and 2007, which are from the
NYTs annotated corpus 1996-2007 (\url{https://catalog.ldc.upenn.edu/LDC2008T19}). Figure~\ref{fig:nchange_box} shows how $f(S)$ and time cost varies when we change $n$. We set the budget size $k$ of the summary to be the number of sentences in a human generated summary. The buffer size $b$ of stream clipper is fixed to $200$, while the number of trials in sieve-streaming is $50$, leading to memory requirement of $50k$, which is much larger than $200+k$ of stream clipper. In order to test how performance varies with the order of stream, for each $n$, we run same experiment on $20$ different random orders of the same data.

\begin{figure}[h!]
	\vspace{-0.8em}
	\begin{center}
		\includegraphics[width=1\linewidth]{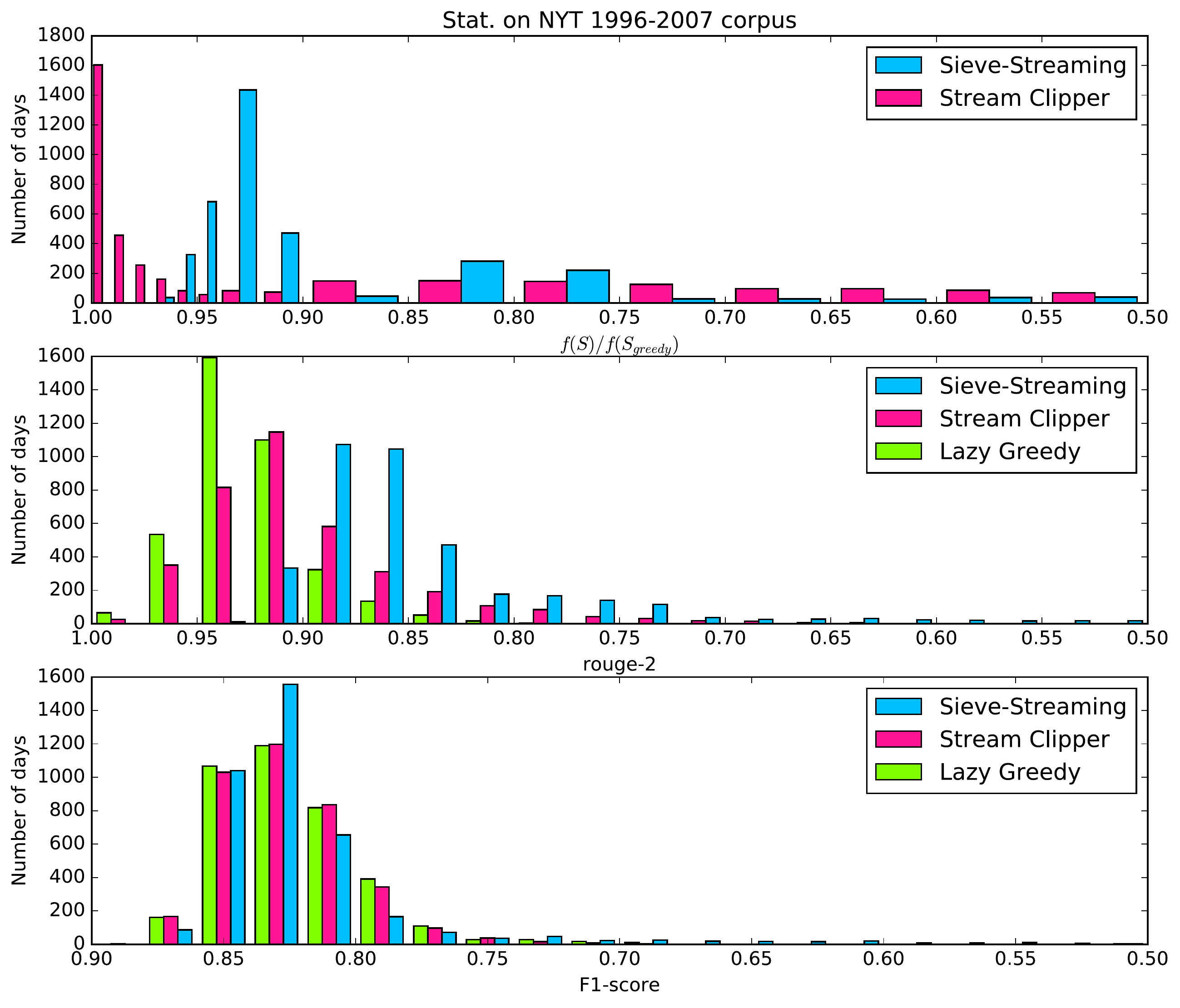}
	\end{center}
	\vspace{-0.8em}
	\caption{Statistics of relative utility $f(S)/f(S_{greedy})$, rouge-2 score and F1-score on daily news summarization results of $3823$ days' news from New York Times corpus between 1996-2007. Stream clipper achieves relative utility close to $1$ for most days. It has similar or more number of days than lazy greedy in the bins of high ($\geq 0.9$) rouge-2 and F1-score.}
	\label{fig:NYT_utility}
	\vspace{-0.8em}
\end{figure}
\begin{figure}[h!]
	\vspace{-0.8em}
	\begin{center}
		\includegraphics[width=1\linewidth]{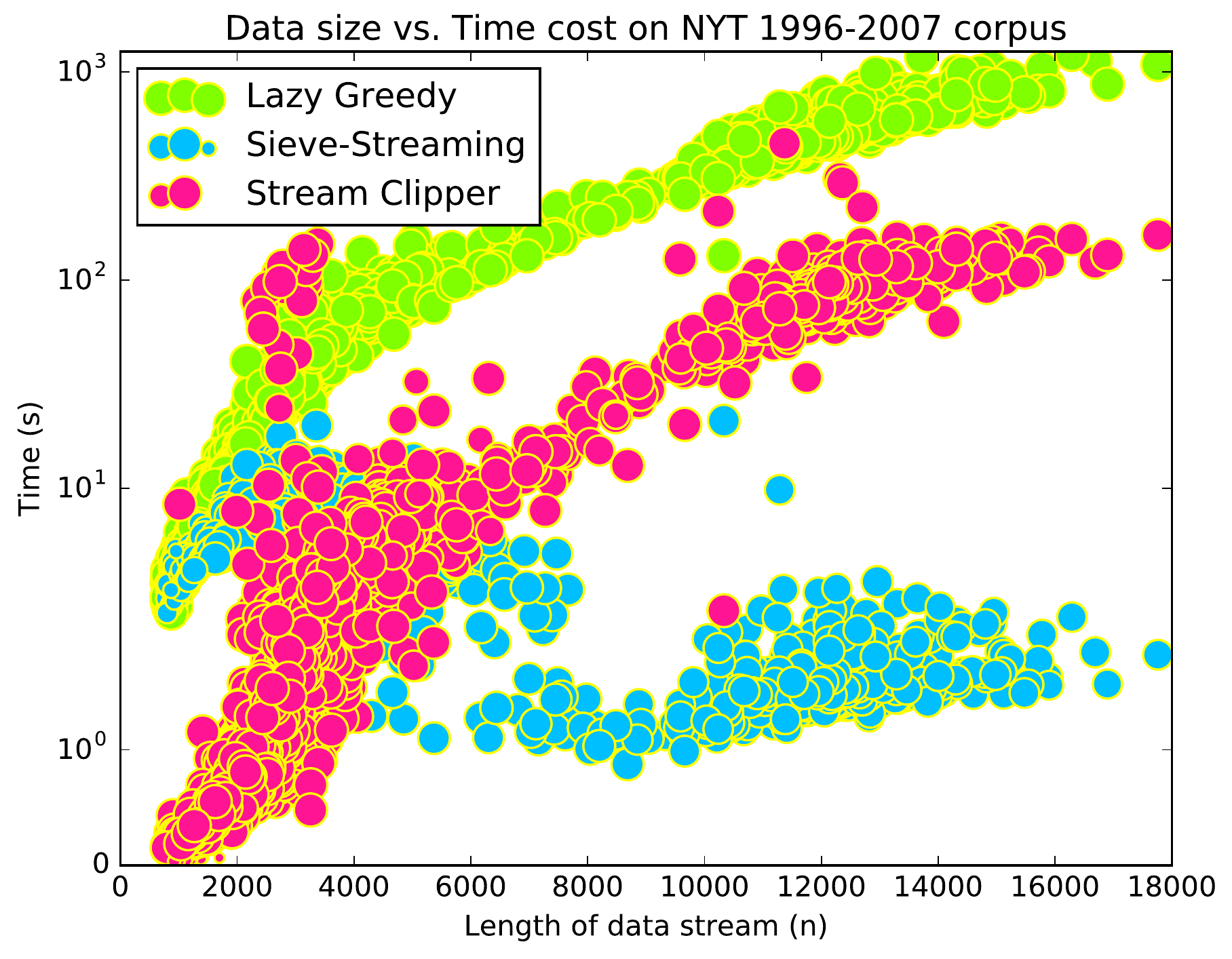}
	\end{center}
	\vspace{-0.8em}
	\caption{Length of data stream $n$ vs. time cost (exponential scale) on daily news summarization of $3823$ days' news from New York Times corpus between 1996-2007. The area of each circle is proportional to relative utility $f(S)/f(S_{greedy})$. The time cost of stream clipper grows slower than lazy greedy and saturates when $n\geq \sim14000$. The time cost of sieve-streaming increases at first, but becomes small and does not change after $n\geq\sim6000$. This is because the algorithm quickly fills $S$ with $k$ elements when $n<6000$ and does not change $S$ anymore. However, this avoids to enroll new elements and leads to worse relative utility reflected by the smaller blue circles.}
	\label{fig:NYT_nvstime}
	\vspace{-0.8em}
\end{figure}

The utility and time cost of both streaming algorithms do not change too much when the order changes. The utility curve of stream clipper overlaps that of lazy greedy, while its time cost is much less and increases more slowly than that of lazy greedy. Sieve-streaming performs much worse than SS in terms of utility, and its time cost is only slightly less  and even slightly decreases when increasing $n$ (this is because it quickly fills $S$ with $k$ elements and stops much earlier before seeing all $n$ elements). 



Figure~\ref{fig:mchange} shows how relative utility $f(S)/f(S_{greedy})$ ($S_{greedy}$ denotes the solution of the offline greedy algorithm) and time cost of the two streaming algorithms vary with memory size. Stream clipper quickly reaches a $f(S)\approx0.97f(S_{greedy})$ close to $f(S)$ of greedy algorithm once $m$ exceeds $200$, while sieve-streaming achieves much smaller $f(S)$ which does not increase until $m\geq 800$. Note the time cost of stream clipper is larger than that of sieve-streaming when $m\leq 400$ but dramatically decreases below it quickly. This is because the buffer cleaning procedure in A\ref{alg:StreamClipper}.L15-17 needs to be frequently executed if $m$ is small (and $b$ is small). However, a slight increase in memory size can effectively reduce the time cost.

Figure~\ref{fig:OPTchange} shows the robustness of the two streaming
algorithms to parameter $\hat f(S^*)$. In the wide range of
$[100, 400]$, stream clipper keeps a $\geq 0.9$ relative utility,
while sieve-streaming decreases dramatically around its peak value
$0.9$. Hence, sieve-streaming is more sensitive to $\hat f(S^*)$ and
thus a delicate search of $\hat f(S^*)$ is necessary. This results in
a high memory burden. By contrast, our approach adaptively adjusts
two thresholds via swapping and buffer cleaning even when the
estimate $\hat f(S^*)$ used to initialize them is inaccurate.

\subsection{NYT News Summarization}

In this section, we conduct summarization experiments on two news corpora, The New York Times annotated corpus 1996-2007 and the DUC 2001 corpus (\url{http://www-nlpir.nist.gov/projects/duc}). 

The first dataset includes all the articles published on The New York Times in $3823$ days from 1996-2007. For each day, we collect the sentences in articles associated with human generated summaries as the ground set $V$ (with sizes varying from $2000$ to $20000$), and extract their TFIDF features to build $f(S)$. We concatenate the sentences from all human generated summaries in the same date as reference summary. We compare the machine generated summaries produced by different methods with the reference summary by ROUGE-2 \cite{rouge} (recall on 2-grams) and ROUGE-2 F1-score (F1-measure based on recall and precision on 2-grams). We also compare their relative utility. As before, sieve-streaming holds a memory size of $50k$. Figure~\ref{fig:NYT_utility} shows the statistics over $3823$ days.

Stream clipper keeps a relative utility $\geq 0.95$ for most days, while sieve-streaming dominates the $\leq 0.95$ region. The ROUGE-2 score of stream clipper is usually better than sieve-streaming, but slightly worse than lazy greedy. However, its F1-score is very close to that of lazy greedy, while sieve-streaming's is much worse.


Figure~\ref{fig:NYT_nvstime} shows the number $n$ of collected sentences in each day and the corresponding time cost of each algorithm. The area of each circle is proportional to the relative utility. We use a log scale time axis for better visualization. Stream clipper is $10\sim 100$ times faster than lazy greedy. Their time cost have similar increasing speed, because as the summary size increases, the greedy stage in stream clipper tends to dominate the computation. The time cost of sieve-streaming decreases when $n\geq 6000$, but its relative utility also reduces fast. This is caused by the aforementioned early stopping.

\begin{figure}[htp]
	\begin{center}
		\includegraphics[width=1.0\linewidth]{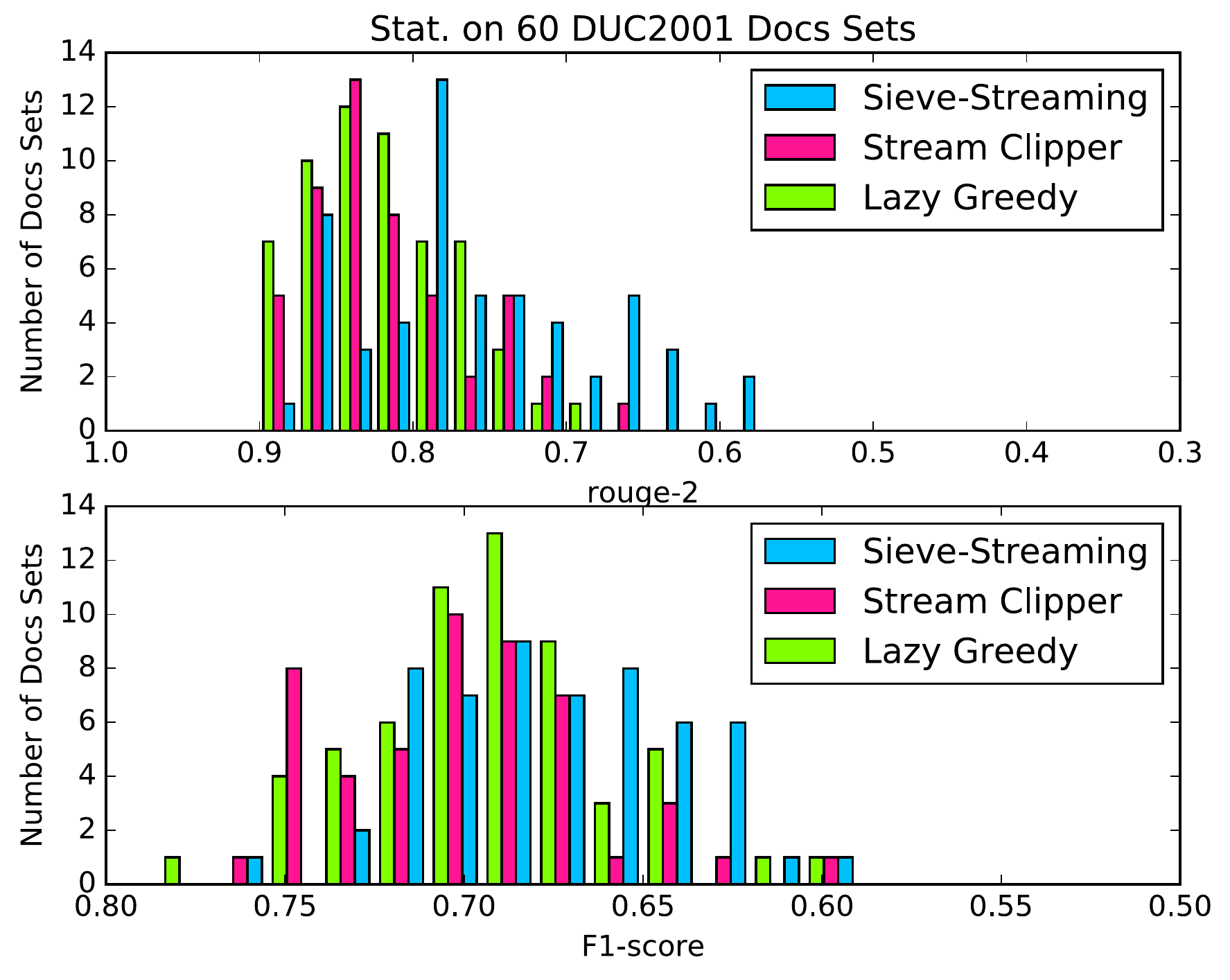}
	\end{center}
	\caption{Statistics of relative utility $f(S)/f(S_{greedy})$, rouge-2 score and F1-score on topic based news summarization results of $60$ document sets from DUC2001 training and test set, comparing to $400$-word human generated summary.}
	\label{fig:DUC2001_400_utility}
\end{figure}
\begin{figure}[htp]
	\begin{center}
		\includegraphics[width=1\linewidth]{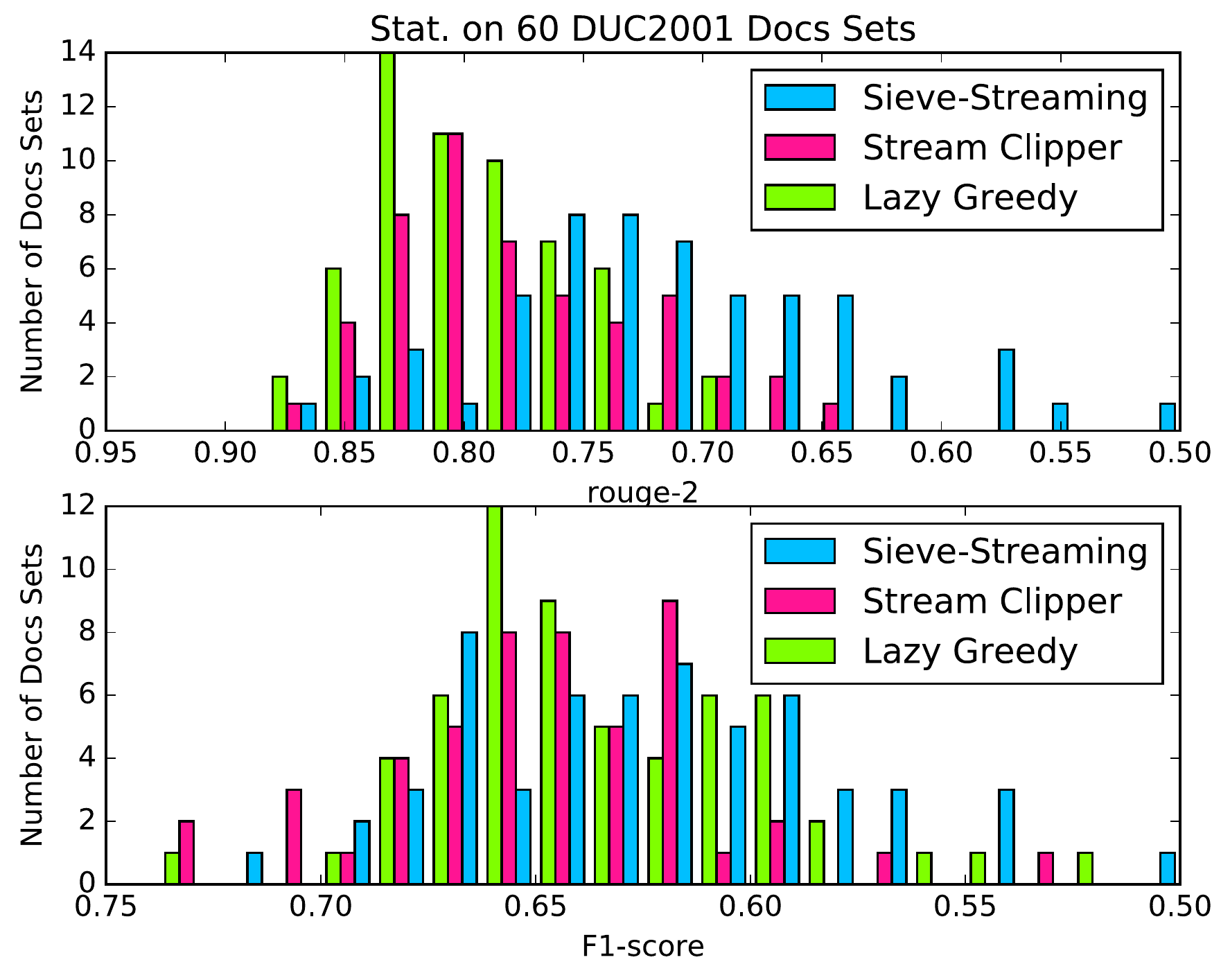}
	\end{center}
	\caption{Statistics of relative utility $f(S)/f(S_{greedy})$, rouge-2 score and F1-score on topic based news summarization results of $60$ document sets from DUC2001 training and test set, comparing to $200$-word human generated summary.}
	\label{fig:DUC2001_200_utility}
\end{figure}

\subsection{DUC2001 News Summarization}

\begin{table*}[htp]
	\caption{Performance of lazy greedy, sieve-streaming, and stream clipper on four topic summarization datasets from DUC 2001. For each topic, the machine generated summary is compared to four human generated ones having word count from 50 to 400.}
	\begin{center}
		\begin{tabular}{l|l|cc|cc|cc|cc}
			\hline
			\multirow{2}{*}{Algorithm} &\multirow{2}{*}{words} &\multicolumn{2}{c|}{Daycare} &\multicolumn{2}{c|}{Healthcare} &\multicolumn{2}{c|}{Pres92} & \multicolumn{2}{c}{Robert Gates}\\
			\cline{3-10}
			& &rouge2 &F1 &rouge2 &F1 &rouge2 &F1 &rouge2 &F1\\
			\hline
			\multirow{4}{*}{Lazy Greedy} &400 &$0.836$ &$0.674$ &$0.845$ &$0.686$ &$0.885$ &$0.686$ &$0.849$ &$0.734$\\
			&200 &$0.813$ &$0.615$ &$0.811$ &$0.632$ &$0.842$ &$0.623$ &$0.788$ &$0.682$\\
			&100 &$0.766$ &$0.542$ &$0.753$ &$0.605$ &$0.618$ &$0.420$ &$0.715$ &$0.621$\\
			&50 &$0.674$ &$0.484$ &$0.765$ &$0.539$ &$0.602$ &$0.341$ &$0.631$ &$0.514$\\
			\hline
			\multirow{4}{*}{Sieve-Streaming} &400 &$0.825$ &$0.687$ &$0.814$ &$0.711$ &$0.827$ &$0.710$ &$0.798$ &$0.745$\\
			&200 &$0.789$ &$0.627$ &$0.782$ &$0.675$ &$0.670$ &$0.659$ &$0.691$ &$0.688$\\
			&100 &$0.747$ &$0.542$ &$0.658$ &$0.597$ &$0.414$ &$0.443$ &$0.632$ &$0.620$\\
			&50 &$0.607$ &$0.475$ &$0.681$ &$0.551$ &$0.413$ &$0.345$ &$0.553$ &$0.477$\\
			\hline
			\multirow{4}{*}{Stream Clipper}  &400 &$0.841$ &$0.724$ &$0.838$ &$0.763$ &$0.859$ &$0.746$ &$0.834$ &$0.754$\\
			&200 &$0.803$ &$0.693$ &$0.807$ &$0.706$ &$0.810$ &$0.654$ &$0.764$ &$0.690$\\
			&100 &$0.763$ &$0.613$ &$0.778$ &$0.670$ &$0.581$ &$0.445$ &$0.732$ &$0.621$\\
			&50 &$0.689$ &$0.489$ &$0.794$ &$0.592$ &$0.463$ &$0.378$ &$0.628$ &$0.569$\\
			\hline
		\end{tabular}\label{table:DUC01}
	\end{center}
\end{table*}

\begin{figure*}[h!]
	\begin{center}
		\includegraphics[width=1\linewidth]{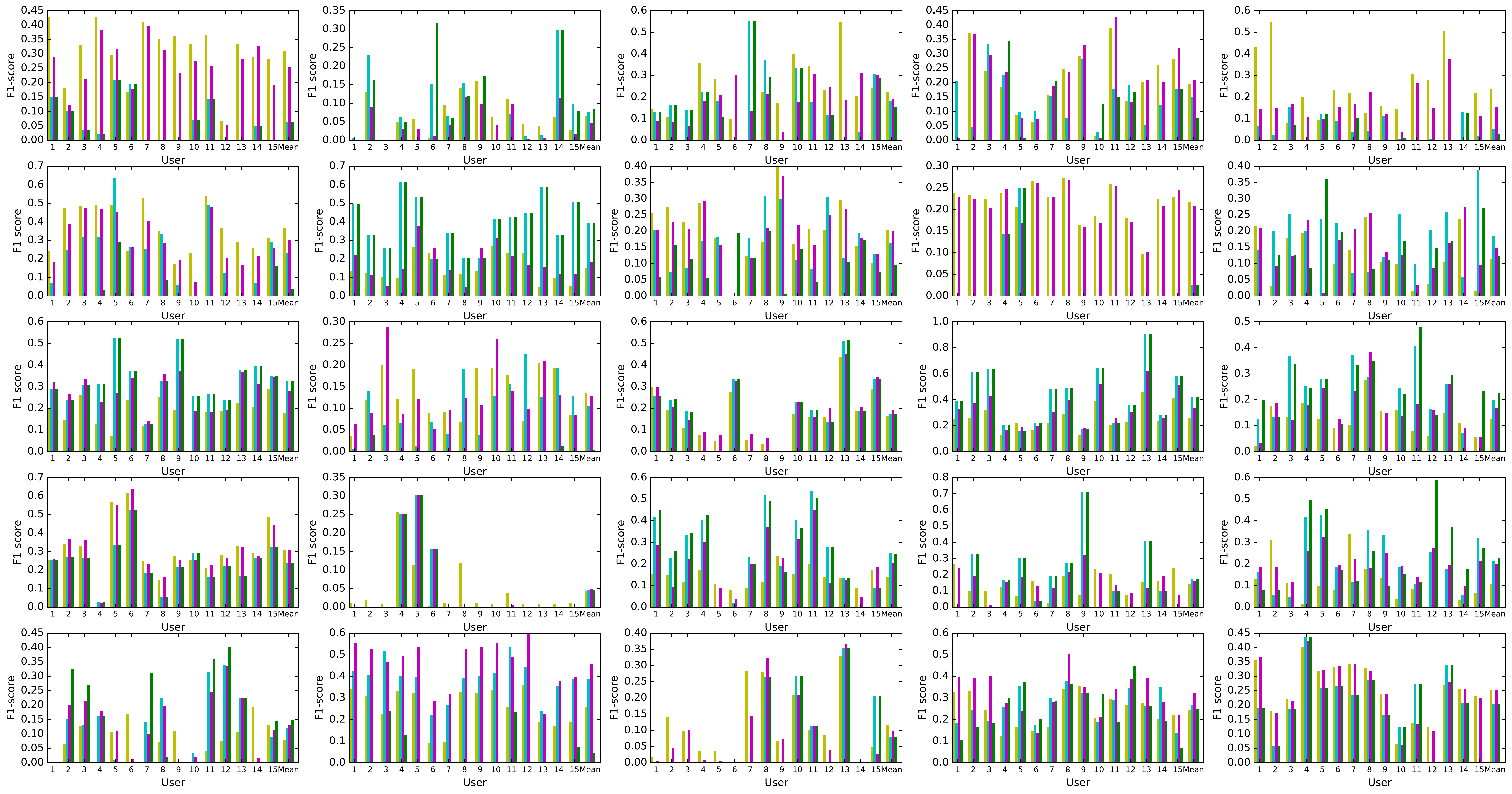}
	\end{center}
	\caption{F1-score of the summaries generated by greedy (yellow bar), sieve-streaming (cyan bar), stream clipper (magenta bar) and the first $15\%$ frames (green bar) comparing to reference summaries from $15$ users on $25$ videos from SumMe dataset. Each plot associates with a video. Stream clipper performs similar to or better than lazy greedy in most plots. }
	\label{fig:videoF1_all}
\end{figure*}

We also observe similar result on DUC 2001 corpus, which are composed of two datasets. The first one includes $60$ sets of documents, each is selected by a NIST assessor because the documents in a set are related to a same topic. The assessor also provides four human generated summary having word counts $400,200,100,50$ for each set. In Figure~\ref{fig:DUC2001_400_utility} and Figure~\ref{fig:DUC2001_200_utility}, we report the statistics to rouge-2 and F1-score of summaries of the same size generated by different algorithms. The second dataset is composed of four document sets associated with four topics. We report the detailed results in Table \ref{table:DUC01}. Both of them show stream clipper can achieve similar performance as offline greedy algorithm, whereas outperforms sieve-streaming.

\subsection{Video Summarization}

\begin{figure*}[t]
	\vspace{-0.5em}
	\begin{center}
		\includegraphics[width=1\linewidth]{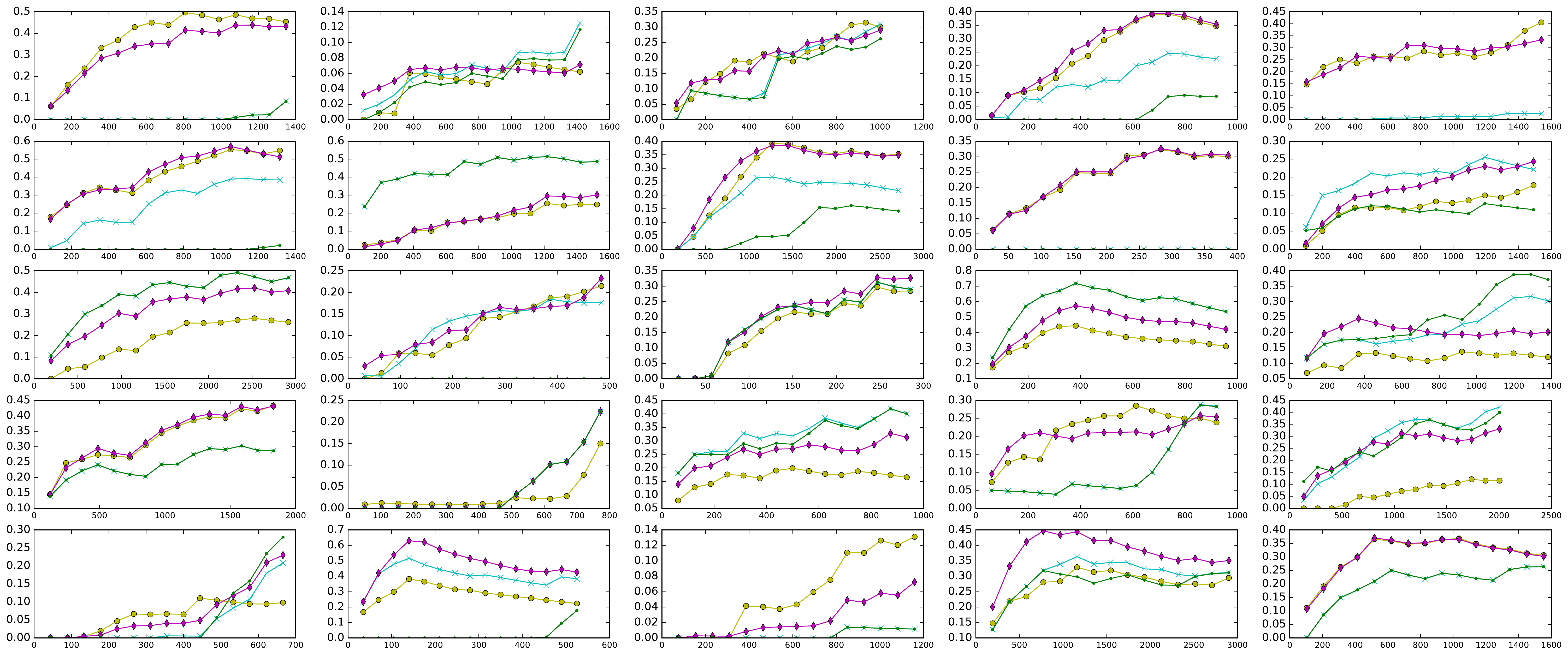}
	\end{center}
	\vspace{-0.8em}
	\caption{F1-score of the summaries generated by lazy greedy (yellow ``{\color{yellow}{$\bullet$}}''), sieve-streaming (cyan ``{\color{cyan}{$\times$}}''), stream clipper (magenta ``{\color{magenta}{$\blacklozenge$}}'') and the first $15\%$ frames (green ``{\color{green}{$\cdot$}}'') comparing to reference summaries of different sizes between $[0.02|V|, 0.32|V|]$ based on ground truth score (voting from $15$ users) on $25$ videos from SumMe. Each plot associates with a video. Stream clipper performs similar to or better than lazy greedy in most plots where sieve-streaming peforms poorly. In the plots where sieve-streaming outperforms others, its performance usually overlaps with that of the first $15\%$ frames. This is consistent with our observation in experiments that sieve-streaming usually saturates the solution $S$ by the first several frames and thus results in a trivial solution $S$.}
	\label{fig:videoF1}
	\vspace{-0.5em}
\end{figure*}

We apply lazy greedy, sieve-streaming, and stream clipper to $25$ videos from video summarization dataset SumMe \cite{Summe}\footnote{\texttt{http://www.vision.ee.ethz.ch/$\sim$gyglim/vsum/}}. Each video has $1000\sim 10000$ frames as given in Table \ref{table:SumMe} \cite{Supp}. We resize each frame to a $180\times 360$ image, and extract features from two standard image descriptors, i.e., a pyramid of HoG (pHoG) \cite{pHoG} to delineate local and global shape, and GIST \cite{GIST} to capture global scene. The $2728$ pHoG features are achieved over a four-level pyramid using $8$ bins with angle of $360$ degrees. The $256$ GIST features are obtained by using $4\times 4$ blocks and $8$ orientation per scale. We concatenate them to form a $2984$-dimensional feature vector for each frame to build $f(\cdot)$. Each algorithm selects $15\%$ of all frames as summary set, i.e., $k=0.15|V|$. Sieve-streaming uses a memory of $10k$ frames, while stream clipper uses a much smaller memory of $300+k$ frames. 

We compare the summaries generated by the three algorithms with the ones produced by the ground truth and $15$ users. Each user was asked to select a subset of frames as summary, and ground truth score of each frame is given by voting from all $15$ users. For each video, we compare each algorithm generated summary with the reference summary composed of the top $p$ frames with the largest ground truth scores for different $p$, and the user summary from different users. In particular, we report F1-score for comparison to ground truth score generated summaries in Figure~\ref{fig:videoF1} (recall comparison is given in Figure~\ref{fig:videoR} \cite{Supp}). We report F1-score for comparison to user summaries in Figure~\ref{fig:videoF1_all} (recall comparison is given in Figure~\ref{fig:videoR_all} \cite{Supp}). In each plot for each video, we also report the average F1-score and average recall over all $15$ users.

Stream clipper approaches or outperforms lazy greedy and shows high F1-score on most videos, while the time cost is small according to Table \ref{table:SumMe}. Although on a few videos sieve-streaming achieves the best F1-score, in most of these cases its generated summaries are trivially dominated by the first $15\%$ frames as shown in Figure~\ref{fig:videoF1_all}-\ref{fig:videoR_all} \cite{Supp}. On these videos, neither lazy greedy nor stream clipper performs well, though they acheive high objective value in optimization. This indicates that the extracted features of the submodular function should be improved.


\section{Conclusion}

In this paper, we introduce stream clipper, a fast and memory-efficient streaming submodular maximization algorithm that can achieve similar performance as commonly used greedy algorithm. It uses two thresholds to either select important element into summary or a buffer. The final summary is generated by greedily selecting more elements from the buffer. Swapping and buffer-reduce procedures are triggered lazily for further improvement and bounding memory. Thresholds are adjusted adaptively to avoid search for the optimal thresholds. 

\bibliographystyle{ACM-Reference-Format}
\bibliography{SC}

\newpage
\appendix


\section{Proof of Theorem \ref{the:k1Snbound}}

\begin{proof}
	We use $j\in[n+1,n+k-|S_n|]$ to index the $(j-n)^{th}$ step of the greedy algorithm in A\ref{alg:StreamBuffer}.L8-10, while $j=n$ indexes variables after passing $n$ elements and before the greedy procedure in A\ref{alg:StreamBuffer}.L8-10. Note $j=n+k-|S_n|$ indexes the final step of the greedy procedure.
	We have
	\begin{equation}\label{equ:t21}
	\begin{array}{ll}
	&f(S^*)\\
	&\leq f(S_j\cup S^*)\leq f(S_j)+\sum\limits_{v\in S^*\backslash S_j}f(v|S_j)\\
	&\leq f(S_j)+\sum\limits_{v\in S^*\backslash (S_j\cup B_n)}f(v|S_j)+\sum\limits_{v\in S^*\cap B_n}f(v|S_j)\\
	&\leq f(S_j)+k_n\tau^-+(k-k_n)[f(S_{j+1})-f(S_j)].
	\end{array}
	\end{equation}
	The first inequality uses monotonicity of $f(\cdot)$, while the second one is due to submodularity. 
	
	The third inequalities follows from set theory along with
	the fact that $f$ is non-negative monotone non-decreasing.
	The fourth inequality is a result of applying rejection rule $f(v|S)< \tau^-$ to the $k_n$ rejected elements in $S^*\backslash (S_n\cup B_n)$, and the max greedy selection rule in A\ref{alg:StreamBuffer}.L9. Rearranging (\ref{equ:t21}) yields
	\begin{equation}
	[f(S^*)-k_n\tau^-]-f(S_j)\leq (k-k_n)[f(S_{j+1})-f(S_j)]
	\end{equation}
	Let
	\begin{equation}
	\delta_j=[f(S^*)-k_n\tau^-]-f(S_j),
	\end{equation}
	then the rearranged inequality equals to
	\begin{equation}\label{equ:deltaj}
	\delta_j\leq (k-k_n)[\delta_j-\delta_{j+1}],
	\end{equation}
	When $\delta_j-\delta_{j+1} > 0$ and $k-k_n>0$, 
	this is exactly
	\begin{equation}\label{equ:jinequ}
	\delta_{j+1}\leq\left(1-\frac{1}{k-k_n}\right)\delta_j.
	\end{equation}
	Since in total $k-|S_n|$ elements are selected by the greedy algorithm, applying (\ref{equ:jinequ}) from $j=n+k-|S_n|-1$ to $j=n$ yields
	\begin{equation}
	\delta_{n+k-|S_n|}\leq \left(1-\frac{1}{k-k_n}\right)^{k-|S_n|}\delta_n\leq e^{-\frac{k-|S_n|}{k-k_n}}\delta_n,
	\end{equation}
	which is equivalent to
	\begin{equation}\label{equ:convex_comb}
	\begin{array}{ll}
	&f(S_{sc})\\
	&\geq \left(1-e^{-\frac{k-|S_n|}{k-k_n}}\right)[f(S^*)-k_n\tau^-]+e^{-\frac{k-|S_n|}{k-k_n}}f(S_n)\\
	&\geq \left(1-e^{-\frac{k-|S_n|}{k-k_n}}\right)[f(S^*)-k_n\tau^-]+e^{-\frac{k-|S_n|}{k-k_n}}|S_n|\tau^+
	\end{array}
	\end{equation}
	by applying the definition of $\delta_j$. The last inequality is due to 
	\begin{equation}\label{equ:large}
	f(S_n)=\sum_{v_i\in S_n}f(v_i|S_{i-1})\geq |S_n|\tau^+,
	\end{equation}
	which is due to selection rule $f(v|S)\geq \tau^+$ used in A\ref{alg:StreamBuffer}.L2. For each selected element $v_i\in S_n$, $S_{i-1}$ in (\ref{equ:large}) is the solution $S$ at the beginning of the $i^{th}$ step. We simply use telescope sum representation of $f(S_n)$ to achieve the equality in (\ref{equ:large}).
	
	When $\delta_j-\delta_{j+1} = 0$, or $k-k_n=0$, or both are zeros, (\ref{equ:deltaj}) implies $\delta_j\leq 0$, which leads to
	\begin{equation}\label{equ:zero_case}
	f(S_{sc})\geq f(S_j)\geq f(S^*)-k_n\tau^-.
	\end{equation}
	Note the right hand side of (\ref{equ:convex_comb}) is a convex combination of $f(S^*)-k_n\tau^-$ and $|S_n|\tau^+$, and thus is smaller than or equal to their maximum. If $f(S^*)-k_n\tau^-\geq|S_n|\tau^+$, (\ref{equ:zero_case}) directly leads to (\ref{equ:convex_comb}). If $f(S^*)-k_n\tau^-<|S_n|\tau^+$, since (\ref{equ:large}) is true and $f(S_{sc})\geq f(S_n)$ (because greedy algorithm cannot decrease $f(\cdot)$), (\ref{equ:convex_comb}) still holds.
	
	This completes the proof.
\end{proof}

\section{Proof of Proposition \ref{prop:boundmin}}

\begin{proof}
	In discussion of the minimum of $g(k_n)$, we frequently use its derivative and second order derivative w.r.t. $k_n$. For simplicity, we use $g^{'}(k_n)$ to denote $\frac{\partial g}{\partial k_n}$ and $g^{''}(k_n)$ to denote $\frac{\partial^2 g}{\partial k_n^2}$. 
	
	The second order derivative in (\ref{equ:2derivative}) can be represented as the product of a positive function $u(k_n)$ and a linear function $v(k_n)$ of $k_n$, i.e.,
	\begin{equation}\label{equ:2ndg}
	g^{''}(k_n)=u(k_n)\cdot v(k_n),
	\end{equation}
	where 
	\begin{equation}
	u(k_n)=e^{-\frac{k-|S_n|}{k-k_n}}\frac{k-|S_n|}{(k-k_n)^2},
	\end{equation}
	and
	\begin{equation}
	\begin{array}{ll}
	v(k_n)=&(k-2k_n+|S_n|)(f(S^*)-|S_n|\tau^+)-\\
	&(2k^2-3kk_n+|S_n|k_n)\tau^-.
	\end{array}
	\end{equation}
	In order to study the monotonicity of $g^{'}(k_n)$ in $k_n\in[0,k]$, we have to study the sign of $g^{''}(k_n)$ given in (\ref{equ:2ndg}). Since $u(k_n)$ is always positive, the sign of $g^{''}(k_n)$ depends on the sign of $v(k_n)$.
	
	The straight line $v(k_n)$ starts from $[0,v(0)]$, and ends at $[k,v(k)]$ with
	\begin{equation}\label{equ:vk_neg}
	v(k)=(|S_n|-k)\left[f(S^*)-|S_n|\tau^+-k\tau^-\right]\leq 0.
	\end{equation}
	Since we already know $v(k_n)$ is linear and monotone and $v(k)\leq 0$, how the sign of $v(k_n)$ changes in $k_n\in[0,k]$ depends on the sign of $v(0)$. The following discusses two cases when $v(0)\leq 0$ and $v(0)\geq 0$.
	\begin{enumerate}
		\item When $v(0)\leq 0$, $v(k_n)$ is non-positive due to its monotonicity of linear function, that is, $v(k_n)\leq 0$ for all $k_n\in[0, k]$. Combining with the fact $u(k_n)\geq 0$, we have $g^{''}(k_n)\leq 0$. Hence $g^{'}(k_n)$ is decreasing monotonically in $k_n\in[0,k]$. Because
		\begin{equation}\label{equ:ggk_neg}
		g^{'}(k)=-\tau^-\leq 0,
		\end{equation}
		we now only need to discuss two cases, $g^{'}(0)\geq 0$ and $g^{'}(0)\leq 0$.
		\begin{enumerate}
			\item When $g^{'}(0)\geq 0$, since $g^{'}(k_n)$ is monotonically decreasing, $g^{'}(k_n)$ starts from a nonnegative value $g^{'}(0)$ at $k_n=0$, passes $g^{'}(k_n)=0$ and keeps negative till $k_n=k$. Hence, $g(k_n)$ firstly increases and then decreases in $k_n\in [0, k]$. Therefore, $\min_{k_n\in[0,k]}g(k_n)$ has to be either $g(0)$ or $g(k)$.
			\item When $g^{'}(0)\leq 0$, since $g^{'}(k_n)$ is decreasing monotonically and $g^{'}(k)\leq 0$, $g^{'}(k_n)\leq 0$ in $k_n\in[0, k]$. Hence, $g(k_n)$ monotonically decreases in $k_n\in[0, k]$. Therefore, $\min_{k_n\in[0,k]}g(k_n)=g(k)$.
		\end{enumerate}
		\item When $v(0)\geq 0$, i.e.,
		\begin{equation}\label{equ:v0neg}
		f(S^*)-|S_n|\tau^+\geq\frac{2k^2}{k+|S_n|}\cdot \tau^-,
		\end{equation}
		since $v(k_n)$ is linear and $v(k)\leq 0$ in (\ref{equ:vk_neg}), it monotonically decreases in $k_n\in[0, k]$. Because $u(k_n)$ is positive, $g^{''}(k_n)$ starts from a nonnegative value $g^{''}(0)$, passes zero and ends at a non-positive value $g^{''}(k)$. This implies $g^{'}(k_n)$ firstly increases and then decreases in $k_n\in[0,k]$. In order to decide the shape of $g(k_n)$, we further need the sign of $g^{'}(0)$ and $g^{'}(k)$.
		
		We already know $g^{'}(k)\leq 0$ from (\ref{equ:ggk_neg}). So we only need to determine whether $g^{'}(0)\geq 0$ or not. According to the derivative given in (\ref{equ:derivative}),
		\begin{equation}\label{equ:gpartial0}
		\begin{array}{ll}
		g^{'}(0)=&e^{-\frac{k-|S_n|}{k}}\left[\left(1-e^{\frac{k-|S_n|}{k}}\right)\tau^-+\right.\\
		&\left.\frac{k-|S_n|}{k^2}\left(f(S^*)-|S_n|\tau^+\right)\right].
		\end{array}
		\end{equation}
		Applying the inequality (\ref{equ:v0neg}) to the second term of the right hand side in above equation yields
		\begin{equation}\label{equ:part2}
		\begin{array}{ll}
		&\left(1-e^{\frac{k-|S_n|}{k}}\right)\tau^-+\frac{k-|S_n|}{k^2}\left(f(S^*)-|S_n|\tau^+\right)\geq\\
		&~~~~~~~~~~\left[1-e^{\frac{k-|S_n|}{k}}+\frac{2(k-|S_n|)}{k+|S_n|}\right]\tau^-.
		\end{array}
		\end{equation}
		Let
		\begin{equation}
		h(|S_n|)=1-e^{\frac{k-|S_n|}{k}}+\frac{2(k-|S_n|)}{k+|S_n|},
		\end{equation}
		then
		\begin{equation}
		h^{'}(|S_n|)=e^{\frac{k-|S_n|}{k}}-\frac{4k^2}{(k+|S_n|)^2}.
		\end{equation}
		It is not hard to verify both $e^{\frac{k-|S_n|}{k}}$ and $\frac{4k^2}{(k+|S_n|)^2}$ decreases monotonically w.r.t. $|S_n|$. In addition, the former is smaller than the latter when $|S_n|=0$ and equal to the latter when $|S_n|=k$, so $\frac{4k^2}{(k+|S_n|)^2}\geq e^{\frac{k-|S_n|}{k}}$ in $|S_n|\in[0, k]$. Hence, $h^{'}(|S_n|)\leq 0$ and thus $h(|S_n|)$ monotonically decreases. Therefore,
		\begin{equation}
		h(|S_n|)\geq h(k)=0.
		\end{equation}
		Recall (\ref{equ:part2}) and (\ref{equ:gpartial0}), we have $g^{'}(0)\geq 0$. So $g^{'}(k_n)$ starts from a nonnegative value at $k_n=0$, firstly increases and then decreases to a non-positive value at $k_n=k$. Hence, $g(k_n)$ increases at first and then decreases in $k_n\in[0, k]$. Therefore, $\min_{k_n\in[0,k]}g(k_n)$ has to be either $g(0)$ or $g(k)$.
	\end{enumerate}
	This completes the discussion of all possible cases and thus finishes the proof.
\end{proof}

\section{Proof of Theorem \ref{the:Snbound}}

\begin{proof}
	We prove the lower bound in three difference cases given in Theorem \ref{the:Snbound}. Note the bound given in (\ref{equ:k1Snbound}) is a convex combination of $f(S^*)-k_n\tau^-$ and $|S_n|\tau^+$. In the first case, we do not use (\ref{equ:k1Snbound}) to derive the bound because the condition for the first case equals to $f(S^*)-k_n\tau^-\leq |S_n|\tau^+$. So the largest value (\ref{equ:k1Snbound}) can achieve is $|S_n|\tau^+$. But according to (\ref{equ:large}), we have $f(S_n)\geq |S_n|\tau^+$. 
	
	Since greedy algorithm always improves the objective $f(S_{sc})$ i.e., $f(S_{sc})\geq f(S_n)$, we have 
	\begin{equation}\label{equ:fSlarge}
	f(S_{sc})\geq f(S_n)\geq |S_n|\tau^+
	\end{equation}
	In addition, $k_n\leq k$ and $|S_n|\leq k$ lead to
	\begin{equation}
	f(S^*)\leq k_n\tau^-+|S_n|\tau^+\leq k(\tau^-+\tau^+)\leq 2k\tau^+,
	\end{equation}
	which indicates $\tau^+\geq \frac{f(S^*)}{2k}$.
	Combining with (\ref{equ:fSlarge}), we have
	\begin{equation}
	f(S_{sc})\geq |S_n|\tau^+\geq \frac{|S_n|}{k}\times \frac{f(S^*)}{2}.
	\end{equation}
	
	The proof for the other two cases relies on Proposition \ref{prop:boundmin}. According to it, in the second and third case when $f(S^*)\geq k_n\tau^-+|S_n|\tau^+$, the minimum of the lower bound in (\ref{equ:k1Snbound}) w.r.t. $k_n\in[0,1]$ is either $g(k)$ or $g(0)$ given in Proposition \ref{prop:boundmin}. So we only need to find out which one is smaller in each case since we are searching for the worst bound w.r.t. $k_n$.
	
	According to Theorem \ref{the:Snbound}, the condition for the second case is
	\begin{equation}\label{equ:cond2}
	f(S^*)\leq e^{1-\frac{|S_n|}{k}}k\tau^-+|S_n|\tau^+,
	\end{equation}
	which is a rearrangement of $g(k)\leq g(0)$, i.e.,
	\begin{equation}\label{equ:cond2a}
	f(S^*)-k\tau^-\leq \left(1-e^{-1+\frac{|S_n|}{k}}\right)f(S^*)+e^{-1+\frac{|S_n|}{k}}|S_n|\tau^+.
	\end{equation}
	Therefore, the lower bound in the second case is $f(S^*)-k\tau^-$.
	
	In the third case, by reversing the inequalities in both (\ref{equ:cond2}) and (\ref{equ:cond2a}), we can prove $g(0)\leq g(k)$ and thus the lower bound of the third case is $g(0)$. This completes the proof.
\end{proof}

\section{Proof of Corollary \ref{cor:bestbound}}

\begin{proof}
	In the first case, according to (\ref{equ:bound1}), by using $\tau^-\leq \frac{f(S^*)}{2k}$, we have
	\begin{align}
	\notag f(S)&\geq |S_n|\tau^+\geq f(S^*)-k_n\tau^-\\
	&\geq \left(1-\frac{k_n}{2k}\right)f(S^*)\geq\frac{f(S^*)}{2}.
	\end{align}
	In the second case, the condition is equivalent to
	\begin{equation}
	\frac{f(S^*)-|S_n|\tau^+}{ke^{1-\frac{|S_n|}{k}}}\leq \tau^-\leq \frac{f(S^*)-|S_n|\tau^+}{k_n}.
	\end{equation}
	Since the lower bound $f(S^*)-k_n\tau^-$ (\ref{equ:bound2}) increases when reducing $\tau^-$, its maximum w.r.t. $\tau^-$ is achieved when $\tau^-=(f(S^*)-|S_n|\tau^+)/(ke^{1-\frac{|S_n|}{k}})$., i.e.,
	\begin{equation}
	f(S^*)-k\tau^-=\left(1-e^{-1+\frac{|S_n|}{k}}\right)f(S^*)+e^{-1+\frac{|S_n|}{k}}|S_n|\tau^+,
	\end{equation} 
	which is exactly the same bound (\ref{equ:bound3}) for the third case.
	
	In the third case, by substituting $|S_n|=0$ into (\ref{equ:bound3}), the bound in the third case becomes
	\begin{equation}
	f(S_{sc})\geq \left(1-e^{-1}\right)f(S^*).
	\end{equation}
	This completes the proof.
\end{proof}

\section{Proof of Proposition \ref{the:combine23}}

The following corollary is derived from the bound of Case 1 in Theorem \ref{the:Snbound}. 
\begin{corollary}\label{cor:case1}
	For $\alpha\in[0,k_n/k]$, given $\tau^-$ and $\tau^+\geq \tau^-$, if
	\begin{align}\label{equ:case1t}
	\notag &m^*\triangleq\min\left\{m\in[n]:\tau^+\geq\frac{(1-\alpha)f(S^*)}{m}\right\},\\
	&~~~~~~~~~~~\frac{\alpha f(S^*)}{k_n}\leq \tau^-\leq \tau^+,
	\end{align}
	for any order $\sigma\in\Phi_1(\alpha)$ where
	\begin{equation}\label{equ:case12}
	\Phi_1(\alpha)\triangleq\left\{\sigma\in\Sigma: |S_n|\geq m^*, |B_n|\leq b\right\},
	\end{equation}
	we have
	\begin{equation}
	f(S)\geq(1-\alpha) f(S^*).
	\end{equation}
\end{corollary}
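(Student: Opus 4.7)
The plan is to show that the hypotheses of the corollary place the algorithm squarely into Case~1 of Theorem~\ref{the:Snbound}, and then to read off the $(1-\alpha)$-factor by substituting the explicit lower bounds on $\tau^+$ and on $|S_n|$ that the hypotheses provide. So the whole argument is really just a bookkeeping check: a case-selection verification followed by a one-line substitution.

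First I would verify that the Case~1 hypothesis $f(S^*)\leq k_n\tau^-+|S_n|\tau^+$ is in force. The lower bound $\tau^-\geq \alpha f(S^*)/k_n$ gives $k_n\tau^-\geq \alpha f(S^*)$ immediately. By the very definition of $m^{*}$, we have $\tau^+\geq (1-\alpha)f(S^*)/m^{*}$, and the order-set membership $\sigma\in\Phi_1(\alpha)$ forces $|S_n|\geq m^{*}$, so $|S_n|\tau^+\geq m^{*}\tau^+\geq (1-\alpha)f(S^*)$. Adding the two pieces gives $k_n\tau^-+|S_n|\tau^+\geq \alpha f(S^*)+(1-\alpha)f(S^*)=f(S^*)$, which is exactly the Case~1 trigger.

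Second, Case~1 of Theorem~\ref{the:Snbound} asserts $f(S_{sc})\geq |S_n|\tau^+$, and the chain in the previous paragraph already produced $|S_n|\tau^+\geq (1-\alpha)f(S^*)$, so we conclude $f(S_{sc})\geq (1-\alpha)f(S^*)$. The remaining condition $|B_n|\leq b$ inside $\Phi_1(\alpha)$ is only a memory-budget constraint and does not enter the utility bound; the implicit $|B_n|\geq k-|S_n|$ needed for the final greedy fill-up carries over from the standing assumption of Theorem~\ref{the:k1Snbound}.

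There is no real obstacle: the corollary is essentially a repackaging of Case~1 in which the algorithm-dependent quantities $|S_n|$ and $\tau^+$ are replaced by the worst-case surrogates valid on $\Phi_1(\alpha)$. The only subtle point worth flagging is feasibility of the simultaneous constraints $\alpha f(S^*)/k_n\leq \tau^-\leq \tau^+$ and $\tau^+\geq (1-\alpha)f(S^*)/m^{*}$: the range restriction $\alpha\in[0,k_n/k]$ is precisely what keeps this system consistent (for large enough $m^{*}$), so that the set of orders on which the guarantee is claimed is non-vacuous.
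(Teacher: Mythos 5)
Your proposal is correct and follows essentially the same route as the paper's proof: verify the Case~1 condition $f(S^*)\leq k_n\tau^-+|S_n|\tau^+$ by combining $k_n\tau^-\geq\alpha f(S^*)$ with $|S_n|\tau^+\geq m^{*}\tau^+\geq(1-\alpha)f(S^*)$, then substitute the latter into the Case~1 bound $f(S_{sc})\geq|S_n|\tau^+$. Your closing remark on the role of $\alpha\in[0,k_n/k]$ also matches the paper's observation that this range is exactly what makes $\alpha f(S^*)/k_n\leq\tau^-\leq f(S^*)/k$ consistent with the standing requirement $|B_n|\geq k-|S_n|$.
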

\begin{proof}
	The inequalities about $\tau^-$ and $\tau^+$ in (\ref{equ:case1t}) lead to
	\begin{align}
	\notag k_n\tau^-&\geq \alpha f(S^*)=f(S^*)-(1-\alpha)f(S^*)\\
	&\geq f(S^*)-m^*\tau^+\geq f(S^*)-|S_n|\tau^+,
	\end{align}
	which after rearrangement is the condition for Case 1 in Theorem \ref{the:Snbound}. Substitute the inequality about $\tau^+$ into the bound (\ref{equ:bound1}) for Case 1, we have
	\begin{equation}
	f(S_{sc})\geq(1-\alpha) f(S^*).
	\end{equation}
	Our assumption requires $|B_n|\geq k-|S_n|$, which requires $\tau^-\leq f(S^*)/k$ because otherwise $|B_n|+|S_n|\leq k$. So the lower bound of $\tau^-$ in (\ref{equ:case1t}) needs to satisfy
	\begin{equation}
	\frac{\alpha f(S^*)}{k_n}\leq \tau^-\leq \frac{f(S^*)}{k},
	\end{equation}
	which equals to $\alpha\in[0,k_n/k]$. In addition, the buffer size limit requires $|B_i|\leq b~ \forall i\in[n]$. Since $|B_i|\leq |B_{i+1}|$, we require $|B_n|\leq b$. This complete the proof.
\end{proof}

The following corollary is derived from the bound of Case 2 in Theorem \ref{the:Snbound}.
\begin{corollary}\label{cor:case2}
	For $\alpha\in[0,1/2]$, given $\tau^-$ and $\tau^+\geq\tau^-$, if
	\begin{equation}\label{equ:case2t}
	\begin{array}{ll}
	&\tau^-\leq \frac{\alpha f(S^*)}{k},\\ &M\triangleq\left\{m\in[n]:\frac{f(S^*)-e^{1-\frac{m}{k}}k\tau^-}{m}<\tau^+<\frac{f(S^*)-k\tau^-}{m}\right\},
	\end{array} 
	\end{equation} 
	for any order $\sigma\in\Phi_2(\alpha)\triangleq\bigcup_{m\in M}\Psi_2(\alpha, m)$ where
	\begin{equation}\label{equ:case21}
	\Psi_2(\alpha, m)\triangleq\left\{\sigma\in\Sigma: |S_n|=m, |B_n|\leq b\right\},
	\end{equation}
	we have
	\begin{equation}
	f(S_{sc})\geq(1-\alpha)f(S^*).
	\end{equation}
\end{corollary}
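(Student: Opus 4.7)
The plan is to derive Corollary~\ref{cor:case2} as a direct application of Case~2 of Theorem~\ref{the:Snbound}. Fix an arbitrary order $\sigma \in \Phi_2(\alpha)$. By definition there exists some $m \in M$ such that, when Algorithm~\ref{alg:StreamBuffer} processes $\sigma$, we have $|S_n| = m$ and $|B_n| \le b$. The task then reduces to (i) verifying that the Case~2 hypotheses of Theorem~\ref{the:Snbound} are satisfied for this specific $|S_n|$, and (ii) translating the resulting bound into a $(1-\alpha)$-guarantee using the hypothesis on $\tau^-$.

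For step (i), I would rearrange the two inequalities that define membership in $M$ and identify them with the Case~2 condition $k_n\tau^- + |S_n|\tau^+ < f(S^*) < e^{1-|S_n|/k}k\tau^- + |S_n|\tau^+$. Multiplying the upper bound $\tau^+ < (f(S^*) - k\tau^-)/m$ by $m$ yields $m\tau^+ + k\tau^- < f(S^*)$; since $k_n = |(S^*\setminus S_n)\setminus B_n| \le k$ and $\tau^- \ge 0$, this gives $|S_n|\tau^+ + k_n\tau^- \le |S_n|\tau^+ + k\tau^- < f(S^*)$, matching the left inequality. Multiplying the lower bound $\tau^+ > (f(S^*) - e^{1-m/k}k\tau^-)/m$ by $m$ yields $f(S^*) < m\tau^+ + e^{1-m/k}k\tau^- = |S_n|\tau^+ + e^{1-|S_n|/k}k\tau^-$, matching the right inequality. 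Hence Case~2 applies and
\begin{equation}
f(S_{sc}) \ge f(S^*) - k\tau^-.
\end{equation}

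For step (ii), the hypothesis $\tau^- \le \alpha f(S^*)/k$ immediately gives $k\tau^- \le \alpha f(S^*)$, and hence $f(S_{sc}) \ge (1-\alpha) f(S^*)$, as required. Since $\sigma \in \Phi_2(\alpha)$ was arbitrary, this finishes the argument.

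There is essentially no analytical obstacle here: everything is bookkeeping that matches the two-sided inequality from the definition of $M$ against the two-sided condition from Case~2 of Theorem~\ref{the:Snbound}. The only subtle point is the step $k_n \le k$, which is needed to weaken the sharp Case~2 lower hypothesis (which involves the unknown quantity $k_n$) into a condition written purely in terms of $k$, $m$, $\tau^-$, and $\tau^+$; this is exactly why $M$ uses $k\tau^-$ rather than $k_n\tau^-$ on its upper bound for $\tau^+$. The buffer-size constraint $|B_n| \le b$ plays no role in the approximation inequality itself and only enters to certify that $\sigma$ is a feasible order for the memory-bounded algorithm.
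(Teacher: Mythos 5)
Your proof is correct and follows essentially the same route as the paper's: identify the two-sided inequality defining $M$ with the Case~2 hypothesis of Theorem~\ref{the:Snbound} (using $k_n\le k$ to pass from $k\tau^-$ to $k_n\tau^-$), invoke the Case~2 bound $f(S_{sc})\ge f(S^*)-k\tau^-$, and finish with $\tau^-\le \alpha f(S^*)/k$. The paper's version is terser and leaves the $k_n\le k$ rearrangement implicit, but the argument is the same.
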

\begin{proof}
	Since $|S_n|=m$, rearranging the inequality about $\tau^+$ in (\ref{equ:case2t}) leads to the condition for Case 2 in Theorem \ref{the:Snbound}, i.e.,
	\begin{equation}
	k_n\tau^-+|S_n|\tau^+< f(S^*)< e^{1-\frac{|S_n|}{k}}k\tau^-+|S_n|\tau^+.
	\end{equation}
	Substituting the inequality about $\tau^-$ in (\ref{equ:case2t}) into the bound (\ref{equ:bound2}) for Case 2 results in
	\begin{equation}
	f(S)\geq(1-\alpha)f(S^*).
	\end{equation}
	The above holds for all $m\in M$. The buffer size limit requires $|B_n|\leq b$. This completes the proof.
\end{proof}


The following corollary is derived from the bound of Case 3 in Theorem \ref{the:Snbound}.
\begin{corollary}\label{cor:case3}
	1) When $\alpha\in[0,1/e]$, given $\tau^-$ and $\tau^+\geq\tau^-$, if 
	\begin{equation}\label{equ:case3t1}
	\begin{array}{ll}
	m^*\triangleq\min&\left\{m\in[n]:\tau^+\geq\frac{(1-e^{1-m/k}\alpha)f(S^*)}{m},\right.\\
	&\left.\tau^-\leq \frac{f(S^*)-m\tau^+}{e^{1-m/k}k}\right\},
	\end{array}
	\end{equation}
	for any order $\sigma\in\Phi_3(\alpha)$ where 
	\begin{equation}\label{equ:case31}
	\Phi_3(\alpha)\triangleq\left\{\sigma\in\Sigma: |S_n|\geq m^*, |B_n|\leq b\right\},
	\end{equation}
	we have
	\begin{equation}
	f(S_{sc})\geq(1-\alpha)f(S^*).
	\end{equation}
	2) When $\alpha\in(1/e,1/2]$, given $\tau^-$ and $\tau^+\geq\tau^-$, if 
	\begin{equation}\label{equ:case3t3}
	\begin{array}{ll}
	&M_1\triangleq\left\{m\in[n]:\frac{(1-\alpha)f(S^*)}{k}\leq \tau^+\leq\frac{f(S^*)}{m+k}\right\}, \\
	&M_2\triangleq\left\{m\in[n]:\tau^-\leq \frac{f(S^*)-m\tau^+}{e^{1-m/k}k}\right\},
	\end{array}
	\end{equation}
	for any order $\sigma\in\Phi_3(\alpha)\triangleq\bigcup_{m\in M_1\cap M_2}\Psi_3(\alpha, m)$ where 
	\begin{equation}\label{equ:case33}
	\Psi_3(\alpha, m)\triangleq\left\{\sigma\in\Sigma: |S_n|=m, |B_n|\leq b\right\},
	\end{equation}
	we have
	\begin{equation}
	f(S_{sc})\geq(1-\alpha)f(S^*),
	\end{equation}
\end{corollary}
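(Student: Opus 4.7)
The strategy is to invoke the Case 3 bound of Theorem \ref{the:Snbound}. When the Case 3 hypothesis $f(S^*)\geq e^{1-|S_n|/k}k\tau^-+|S_n|\tau^+$ holds, the theorem gives
\[
f(S_{sc})\geq g(0):=\left(1-e^{-1+|S_n|/k}\right)f(S^*)+e^{-1+|S_n|/k}|S_n|\tau^+,
\]
and a short algebraic rearrangement shows $g(0)\geq(1-\alpha)f(S^*)$ iff $\tau^+\geq\phi(|S_n|)$, where $\phi(m):=(1-\alpha e^{1-m/k})f(S^*)/m$. So for each part the plan has two steps: (i) verify the Case 3 hypothesis, and (ii) verify $\tau^+\geq\phi(|S_n|)$.

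For part 1, the two conditions defining $m^*$ in (\ref{equ:case3t1}) are exactly (i) and (ii) at $m=m^*$. To lift them to any $|S_n|\geq m^*$, I would compute $\phi'(m)=(f(S^*)/m^2)[-1+\alpha e^{1-m/k}(1+m/k)]$ and observe that $e^{1-m/k}(1+m/k)$ has derivative $-(m/k^2)e^{1-m/k}\leq 0$ on $[0,k]$, so its maximum is $e$ at $m=0$; hence $\alpha e^{1-m/k}(1+m/k)\leq\alpha e\leq 1$ when $\alpha\leq 1/e$, so $\phi$ is non-increasing and (ii) propagates from $m^*$ to $|S_n|$. For (i), if the Case 3 hypothesis happens to fail at $|S_n|$, then Case 2 of Theorem \ref{the:Snbound} applies and gives $f(S_{sc})\geq f(S^*)-k\tau^-$; combining the two $m^*$-conditions (plug the $\tau^+$ lower bound into the $\tau^-$ upper bound) yields $k\tau^-\leq\alpha f(S^*)$, so this fallback is still $\geq(1-\alpha)f(S^*)$.

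For part 2 ($\alpha\in(1/e,1/2]$), the $M_2$ condition at $|S_n|=m$ is again a rewriting of (i), so Case 3 applies. The function $\phi$ is no longer monotone, which is why the statement fixes $|S_n|=m$ for each admissible $m$ rather than using a single lower bound $|S_n|\geq m^*$. To verify (ii) I would use both bounds in $M_1$ jointly: the lower bound $\tau^+\geq(1-\alpha)f(S^*)/k$, the upper bound $\tau^+(m+k)\leq f(S^*)$ (which in particular restricts how large $m$ can be), and the hypothesis $\alpha\leq 1/2$ should together force $\tau^+\geq\phi(m)$. The main obstacle is precisely this step: unlike part 1 there is no single monotonicity to exploit, and the specific upper endpoint $f(S^*)/(m+k)$ in $M_1$ appears to have been chosen so that a pointwise convexity/endpoint-comparison argument closes the inequality $\tau^+\geq\phi(m)$ for every $m\in M_1\cap M_2$; getting this tight comparison right, and understanding why the constraint $\alpha\leq 1/2$ is what makes the endpoints match, is where I expect most of the work to lie.
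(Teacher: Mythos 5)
Part 1 of your proposal is essentially correct. Your reduction to the two checks ``Case-3 hypothesis holds at $|S_n|$'' and ``$\tau^+\geq\phi(|S_n|)$'', and your monotonicity argument for $\phi(m)=(1-\alpha e^{1-m/k})f(S^*)/m$ when $\alpha\leq 1/e$, coincide with the paper's proof (its computation (\ref{equ:de2})). Your handling of the Case-3 hypothesis differs: the paper propagates it from $m^*$ to $|S_n|$ by claiming $(f(S^*)-m\tau^+)/(e^{1-m/k}k)$ is increasing in $m$ (its (\ref{equ:de1})), whereas you fall back on Case 2 of Theorem \ref{the:Snbound} together with the consequence $k\tau^-\leq\alpha f(S^*)$ of the two $m^*$-conditions. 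Your route is actually the safer one: the derivative stated in (\ref{equ:de1}) is miscomputed (the correct value is $\frac{f(S^*)-(m+k)\tau^+}{k^2e^{1-m/k}}$, which is not unconditionally positive), so the paper's propagation is not justified in general, while your fallback works --- with the small repair that if Case 3 fails you might land in Case 1 rather than Case 2, but there the bound $|S_n|\tau^+\geq f(S^*)-k_n\tau^-\geq f(S^*)-k\tau^-$ gives the same conclusion.

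Part 2 has a genuine gap, and it sits exactly where you flagged it: the inequality $\tau^+\geq\phi(m)$ does \emph{not} follow from $m\in M_1\cap M_2$, and no endpoint or convexity argument will close it, because the implication is false. Take $\alpha=0.4$, $m=|S_n|=k/2$, $\tau^+=0.6f(S^*)/k$, $\tau^-=0$. Then $m\in M_1$ (since $0.6f(S^*)/k\leq f(S^*)/(1.5k)$) and $m\in M_2$, Case 3 of Theorem \ref{the:Snbound} applies, but its bound evaluates to $(1-e^{-1/2})f(S^*)+e^{-1/2}\cdot 0.3f(S^*)\approx 0.575\,f(S^*)$, which is strictly below $(1-\alpha)f(S^*)=0.6\,f(S^*)$; equivalently $\phi(m)\approx 0.681 f(S^*)/k>\tau^+$. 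So the deduction from Theorem \ref{the:Snbound} cannot give the claimed conclusion for this admissible $m$. The paper's own proof of this step fails for the same reason: it argues that the bound (\ref{equ:bound3}) is monotone decreasing in $|S_n|$ on $[m,k]$, so that its value at $|S_n|=m$ exceeds its value $k\tau^+$ at $|S_n|=k$, citing the derivative (\ref{equ:de3}); but that derivative is $\frac{e^{-1+|S_n|/k}}{k}[(|S_n|+k)\tau^+-f(S^*)]$, which is nonpositive only for $|S_n|\leq f(S^*)/\tau^+-k$, and since Part 2 forces $\tau^+\geq(1-\alpha)f(S^*)/k\geq f(S^*)/(2k)$, that threshold generally falls strictly below $k$. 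Hence the comparison with $|S_n|=k$ is unjustified, and the numerical example above shows it is actually reversed. Your instinct that this is where the work lies was right; as stated, Part 2 of the corollary is not provable from Theorem \ref{the:Snbound} alone.
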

\begin{proof}
	When 1) $\alpha\in[0,1/e]$, because the increasing monotonicity
	\begin{align}\label{equ:de1}
	\notag &\partial\left(\frac{f(S^*)-|S_n|\tau^+}{e^{1-|S_n|/k}k}\right)/\partial |S_n|\\
	=&\frac{(k-|S_n|)\tau^++f(S^*)}{k^2e^{1-|S_n|/k}}>0,
	\end{align}
	and $|S_n|\geq m^*$ in \ref{equ:case31}, the inequality about $\tau^-$ in (\ref{equ:case3t1}) leads to
	\begin{equation}
	\tau^-\leq \frac{f(S^*)-m^*\tau^+}{e^{1-m^*/k}k}\leq \frac{f(S^*)-|S_n|\tau^+}{e^{1-|S_n|/k}k},
	\end{equation}
	which after rearranging is the condition for Case 3 in Theorem \ref{the:Snbound}. Since $\alpha\leq1/e$, we have
	\begin{align}\label{equ:de2}
	\notag &\partial\left[\frac{(1-e^{1-|S_n|/k}\alpha)f(S^*)}{|S_n|}\right]/\partial |S_n|\\
	\notag =&\frac{f(S^*)\left[\alpha e^{1-|S_n|/k}(1+|S_n|/k)-1\right]}{|S_n|^2}\\
	\leq&\frac{f(S^*)\left[\alpha e-1\right]}{|S_n|^2}\leq 0,
	\end{align}
	where the first inequality is due to $1+x\leq e^x$. So for $|S_n|\geq m^*$, combining the above non-increasing monotonicity and the inequality about $\tau^+$ in (\ref{equ:case3t1}) yields
	\begin{equation}
	\tau^+\geq\frac{(1-e^{1-m^*/k}\alpha)f(S^*)}{m^*}\geq \frac{(1-e^{1-|S_n|/k}\alpha)f(S^*)}{|S_n|}.
	\end{equation}
	Substituting the above inequality into bound (\ref{equ:bound3}) for Case 3 results in
	\begin{equation}
	f(S_{sc})\geq(1-\alpha)f(S^*).
	\end{equation}
	
	
	When 2) $\alpha\in(1/e,1/2]$, for each $m\in M_1\cap M_2$, we have
	\begin{equation}
	\tau^-\leq \frac{f(S^*)-m\tau^+}{e^{1-m/k}k},
	\end{equation}
	which after rearranging is the condition for Case 3 in Theorem \ref{the:Snbound}. According to the inequality about $\tau^+$ in (\ref{equ:case3t3}), for each $m\in M_1\cap M_2$, we have
	\begin{equation}\label{equ:tplusrange}
	\frac{(1-\alpha)f(S^*)}{k}\leq \tau^+\leq\frac{f(S^*)}{m+k},
	\end{equation}
	where the right inequality indicates that the bound (\ref{equ:bound3}) for Case 3 is monotone decreasing w.r.t. $|S_n|$ because the derivative of the bound (\ref{equ:bound3}) w.r.t. $|S_n|$ is 
	\begin{equation}\label{equ:de3}
	\frac{e^{-1+|S_n|/k}}{k}\times\left[(m+k)\tau^+-f(S^*)\right]\leq 0.
	\end{equation}
	So the bound (\ref{equ:bound3}) for $|S_n|=m$ is larger than the bound when $|S_n|=k$, which is $k\tau^+$. By using the left inequality in (\ref{equ:tplusrange}), we have
	\begin{equation}
	f(S_{sc})\geq(1-\alpha)f(S^*).
	\end{equation}
	
	This completes the proof.
\end{proof}

\begin{lemma}\label{lemma:combine23}
	When $\alpha\in[0,1/e]$, $\Phi_2(\alpha)\subseteq \Phi_3(\alpha)$.
\end{lemma}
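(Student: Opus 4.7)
The plan is to show that for every $m \in M$ there is an integer $m_{0} \in M'$ with $m_{0} \leq m$, which yields $m^{\ast} = \min M' \leq m_{0} \leq m$, so any $\sigma \in \Psi_{2}(\alpha, m) \subseteq \Phi_{2}(\alpha)$ has $|S_n|(\sigma) = m \geq m^{\ast}$ and $|B_n|(\sigma) \leq b$, placing it in $\Phi_{3}(\alpha)$.

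The central object is the continuous function $H(x) := x\tau^{+} + e^{1 - x/k} k \tau^{-}$ on $[0, k]$. Rearranging the defining inequalities, $m \in M$ translates to $H(m) > f(S^{\ast})$, $m\tau^{+} + k\tau^{-} < f(S^{\ast})$, and the inequality $\tau^{-} \leq \alpha f(S^{\ast})/k$ (call this (C)); $m' \in M'$ translates to $H(m') \leq f(S^{\ast})$ together with $\tau^{+} \geq (1 - e^{1 - m'/k}\alpha) f(S^{\ast})/m'$ (call this (c')). First, I would use $\alpha \leq 1/e$ together with (C) to observe $H(0) = e k \tau^{-} \leq e \alpha f(S^{\ast}) \leq f(S^{\ast})$. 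Combined with $H(m) > f(S^{\ast})$ and continuity of $H$, the intermediate value theorem yields a real $m^{\dagger} \in (0, m)$ with $H(m^{\dagger}) = f(S^{\ast})$. At this point the (d')-type condition $H(m^{\dagger}) \leq f(S^{\ast})$ holds with equality, and substituting $\tau^{+} = (f(S^{\ast}) - e^{1 - m^{\dagger}/k} k \tau^{-})/m^{\dagger}$ into (c') collapses it to $k \tau^{-} \leq \alpha f(S^{\ast})$, which is exactly (C). Both defining inequalities of $M'$ therefore hold at the real value $m^{\dagger}$.

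The remaining step is to transfer this to an integer $m_{0} \leq m$ lying in $M'$. For (d') at $m_{0}$, I would set $m_{0} = \lfloor m^{\dagger} \rfloor$ and use $m_{0} \leq m^{\dagger}$ together with the monotonicity of $H$ in the relevant range (which holds whenever $\tau^{+} \geq e\tau^{-}$, a mild condition ensuring $H$ is convex and increasing past its minimum). For (c') at $m_{0}$, I would use (\ref{equ:de2}), which establishes that $(1 - e^{1 - x/k}\alpha) f(S^{\ast})/x$ is non-increasing in $x$ when $\alpha \leq 1/e$; combined with the (typically strict) slack in (C), this guarantees a positive-width real interval of $x$-values on which both inequalities hold simultaneously, and under the standing hypothesis of Proposition~\ref{the:combine23} that $M' \neq \emptyset$, this interval must contain an integer $m_{0}$. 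The integer rounding is precisely the main obstacle: (c') and (d') push $m_{0}$ in opposite directions across $m^{\dagger}$, so one must carefully balance them using the slack produced by $\alpha \leq 1/e$ in (C). With $m_{0} \in M'$ and $m_{0} \leq m^{\dagger} < m$, we conclude $m^{\ast} \leq m_{0} \leq m$, completing the proof.
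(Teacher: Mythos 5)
Your reduction and your reformulation via $H(x)=x\tau^{+}+e^{1-x/k}k\tau^{-}$ take a genuinely different route from the paper's proof: the paper verifies the two inequalities defining $M'$ at $m^*$ itself by chaining $\tau^-\leq \alpha f(S^*)/k\leq f(S^*)/(ek)$ with claimed monotonicity of $\frac{f(S^*)-x\tau^+}{e^{1-x/k}k}$ (from (\ref{equ:de1})) and of $\frac{(1-e^{1-x/k}\alpha)f(S^*)}{x}$ (from (\ref{equ:de2})), and never confronts the fact that $M'$ is a set of integers. Your observations that membership in $M'$ with respect to the $\tau^-$-inequality is exactly $H(x)\leq f(S^*)$, that $H$ is convex, that $H(0)=ek\tau^-\leq e\alpha f(S^*)\leq f(S^*)$ precisely because $\alpha\leq 1/e$, and that at the crossing point $m^\dagger$ the $\tau^+$-inequality collapses to $k\tau^-\leq\alpha f(S^*)$, are all correct, and the last one is the same use of condition (C) that drives the paper's second chain.

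The gap is the final rounding step. A positive-width real interval on which both inequalities hold need not contain an integer; $\lfloor m^\dagger\rfloor$ can violate the $\tau^+$-inequality exactly as you fear; and neither the slack in (C) nor the extra assumption $\tau^+\geq e\tau^-$ repairs this. Indeed nothing in the hypotheses rules out $M\neq\emptyset$ with $M'=\emptyset$, in which case $m^*$, and hence $\Phi_3(\alpha)$, is undefined. The clean fix is to take $M'\neq\emptyset$ as a standing hypothesis (Proposition~\ref{the:combine23} imposes it, and it is needed for $m^*$ to exist at all) and then use only the convexity you have already established: since $H$ is convex and $H(0)\leq f(S^*)$, the sublevel set $\{x\geq 0: H(x)\leq f(S^*)\}$ is an initial interval $[0,c]$, and $H(m)>f(S^*)$ forces $c<m$; every $m'\in M'$ satisfies $H(m')\leq f(S^*)$, hence $m'\leq c<m$, and $m^*=\min M'\leq m$ follows with no rounding argument and no need to verify the $\tau^+$-inequality at $m^\dagger$ at all. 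Incidentally, your convex-$H$ formulation also sidesteps the paper's reliance on (\ref{equ:de1}), whose derivative should read $\frac{f(S^*)-(k+|S_n|)\tau^+}{k^2e^{1-|S_n|/k}}$ and is therefore not unconditionally positive.
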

\begin{proof}
	When $\alpha\in[0,1/e]$, for any $m\in M$, if the given $\tau^-$ and $\tau^+\geq\tau^-$ fulfill the inequalities in (\ref{equ:case2t}), they also fulfill the inequalities in (\ref{equ:case3t1}), because 1)
	\begin{equation}
	\tau^-\leq\frac{\alpha f(S^*)}{k}\leq\frac{f(S^*)}{ek}\leq \frac{f(S^{*})-m^{*}\tau^+}{e^{1-m^*/k}k}
	\end{equation}
	the last inequality is due to the monotonicity according to the derivative given in (\ref{equ:de1}), so $\tau^-$ fulfills the inequality about $\tau^-$ in (\ref{equ:case3t1}); and 2)
	\begin{align}
	\notag\tau^+>&\frac{f(S^*)-e^{1-\frac{m}{k}}k\tau^-}{m}\\
	\notag \geq&\frac{(1-e^{1-m/k}\alpha)f(S^*)}{m}\\
	\geq&\frac{(1-e^{1-m^*/k}\alpha)f(S^*)}{m^*}, 
	\end{align}
	the second inequality is due to $\tau^-\leq \frac{\alpha f(S^*)}{k}$, the third inequality is due to the monotonicity according to the derivative given in (\ref{equ:de2}), so $\tau^+$ fulfills the inequality about $\tau^+$ in (\ref{equ:case3t1}). This completes the proof.
\end{proof}

Since $k_n$ in (\ref{equ:case1t}) cannot be known, we will not use Corollary \ref{cor:case1} to derive order complexity. Combing the results of Corollary \ref{cor:case2} and Corollary \ref{cor:case3} by using Lemma \ref{lemma:combine23} yields Proposition \ref{the:combine23}.

\label{sec:extension}
\section{Extending Analysis of Algorithm~\ref{alg:StreamBuffer} to Algorithm~\ref{alg:StreamClipper}}

Comparing to Algorithm~\ref{alg:StreamBuffer} that fixes $\tau^-$ and $\tau^+$ as constants, Algorithm~\ref{alg:StreamClipper} updates $\tau^-$ and $\tau^+$ within the two strategies (A\ref{alg:StreamClipper}.L8, L9, L16). The changes in thresholds may lead to difference in theoretical analysis. The following analysis shows how to extend the approximation bound and order complexity of Algorithm~\ref{alg:StreamBuffer} to Algorithm~\ref{alg:StreamClipper}.

Firstly, we study the reason for increasing $\tau^-$ in A\ref{alg:StreamClipper}.L8 of swapping procedure. The following Lemma indicates how marginal gain $f(w|S)$ changes after swapping. 
\begin{lemma}\label{prop:swap}
	If $f(\cdot)$ is a normalized submodular function, $u\in S$, $v,w\notin S$, then the following holds.
	\begin{equation}
	f(w|S\backslash u\cup v)\leq f(w|S)+f(u|S\backslash u\cup v).
	\end{equation}
\end{lemma}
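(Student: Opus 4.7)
The plan is to rewrite the marginal-gain inequality in terms of raw function values and then recognize it as a direct consequence of the pair $\{$submodularity, monotonicity$\}$. Setting $T \triangleq S \setminus u$, the claim $f(w|S\setminus u\cup v)\leq f(w|S)+f(u|S\setminus u\cup v)$ expands, after cancellation of the common $-f(T\cup v)$ term on both sides, to the equivalent inequality
\begin{equation*}
f(T\cup v\cup w) + f(T\cup u) \;\leq\; f(T\cup v\cup u) + f(T\cup u\cup w).
\end{equation*}

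Next, I would invoke the two-set form of submodularity, $f(X)+f(Y)\geq f(X\cup Y)+f(X\cap Y)$, with the choices $X = T\cup v\cup u$ and $Y = T\cup u\cup w$. Since $u\in X\cap Y$ and $v,w\notin S$ (so $v\neq w$ unless trivial), we get $X\cup Y = T\cup u\cup v\cup w$ and $X\cap Y = T\cup u$, yielding
\begin{equation*}
f(T\cup v\cup u) + f(T\cup u\cup w) \;\geq\; f(T\cup u\cup v\cup w) + f(T\cup u).
\end{equation*}
Finally, monotonicity (which is part of the paper's standing assumption on $f$) gives $f(T\cup u\cup v\cup w)\geq f(T\cup v\cup w)$, and chaining the last two inequalities delivers the target inequality above.

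I do not anticipate any real obstacle: the only mild subtlety is spotting the right identification of $X$ and $Y$ so that $X\cap Y$ comes out to $T\cup u$ (matching the extra $f(T\cup u)$ term that appears on the left), and noting that monotonicity is needed to pass from the four-element superset $T\cup u\cup v\cup w$ down to $T\cup v\cup w$. Everything else is mechanical rearrangement of marginals.
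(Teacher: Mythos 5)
Your proposal is correct and, despite the different packaging, is essentially the paper's own argument: the paper telescopes marginal gains and then applies $f(w|S\cup v)\le f(w|S)$ (submodularity) and $f(u|(S\backslash u)\cup v\cup w)\ge 0$ (monotonicity), which are exactly your lattice inequality on $X=T\cup u\cup v$, $Y=T\cup u\cup w$ (with $X\cap Y=T\cup u$, $X\cup Y=T\cup u\cup v\cup w$) and your final monotonicity step, applied to the very same sets. The only cosmetic difference is that you invoke the union/intersection form of submodularity where the paper uses the diminishing-returns form, and you correctly note both the $v=w$ triviality and that monotonicity is needed even though the lemma statement only says ``normalized submodular.''
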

\begin{proof}
	For simplicity in notations, in the following proof, we use ``$+$'' for set union operator ``$\cup$'' and ``$-$'' for set subtraction operator ``$\backslash$''. It can be proved as follows.
	\begin{equation}
	\begin{array}{ll}
	&f(w|S-u+v)\\
	=&f(w+S-u+v)-f(S+v)+\\
	&\left[f(S+v)-f(S-u+v)\right]\\
	=&f(w+S-u+v)-f(S+v)+f(u|S-u+v)\\
	=&f(w+S-u+v)-f(w+S+v)+\\
	&\left[f(w+S+v)-f(S+v)\right]+f(u|S-u+v)\\
	=&f(w|S+v)+f(u|S-u+v)-f(u|w+S-u+v)\\
	\leq& f(w|S)+f(u|S-b+v)-f(u|w+S-u+v)\\
	\leq& f(w|S)+f(u|S-u+v).
	\end{array}
	\end{equation}
	The first inequality is due to submodularity, and the second inequality is a result of nonnegativity.
\end{proof}

In the following, we use $\tau_i^-$ and $\tau_i^+$ to denote $\tau^-$ and $\tau^+$ at the end (i.e., at A\ref{alg:StreamClipper}.L17) of step $i$.

\begin{theorem}{\label{lemma:tt}}
	The approximation bound and order complexity of Algorithm~\ref{alg:StreamBuffer} holds true for Algorithm~\ref{alg:StreamClipper}, if $\tau^-$ and $\tau^+$ in Section \ref{sec:bound} are replaced respectively by $\tau_n^-$ and $\tau_n^+$.
\end{theorem}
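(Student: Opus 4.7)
The plan is to verify that the two structural inequalities that drive every result in Section \ref{sec:bound}---namely (i) $f(S_n)\geq|S_n|\tau^+$, which appears as (\ref{equ:large}) in the proof of Theorem \ref{the:k1Snbound}, and (ii) $f(v|S_j)\leq\tau^-$ for every element $v$ that has been rejected or removed from the buffer, which is used in the third inequality of (\ref{equ:t21})---continue to hold for Algorithm \ref{alg:StreamClipper} when the constants $\tau^\pm$ are replaced by the terminal values $\tau_n^\pm$. Once these two invariants are established, the proof of Theorem \ref{the:k1Snbound} carries over verbatim (its argument uses only these inequalities, submodularity, and the terminal greedy pass A\ref{alg:StreamClipper}.L19--L21, which is identical to A\ref{alg:StreamBuffer}.L8--L10), and Lemma \ref{lemma:derivative}, Proposition \ref{prop:boundmin}, Theorem \ref{the:Snbound}, Corollary \ref{cor:bestbound}, and Proposition \ref{the:combine23} follow without modification. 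A preparatory observation is that both thresholds are non-decreasing along the run: the swap updates add $f(u|S\setminus u\cup v_i)\geq 0$ and $\rho>0$, and buffer cleaning raises $\tau^-$ by $\Delta\tau>0$; hence $\tau_i^\pm\leq\tau_n^\pm$ for every $i$ and the thresholds are frozen at $\tau_n^\pm$ during the final greedy.

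For invariant (i), I would introduce the potential $D_i\triangleq f(S_i)-|S_i|\tau_i^+$ with $D_0=0$ and check $D_i\geq D_{i-1}$ in every branch. A pure selection satisfies $f(v_i|S_{i-1})\geq\tau_{i-1}^+=\tau_i^+$ and $|S_i|=|S_{i-1}|+1$, giving $D_i-D_{i-1}=f(v_i|S_{i-1})-\tau_i^+\geq 0$. A swap (A\ref{alg:StreamClipper}.L10) satisfies $f(S_i)-f(S_{i-1})=|S_{i-1}|\rho$, $\tau_i^+-\tau_{i-1}^+=\rho$, and $|S_i|=|S_{i-1}|$, so the two increments cancel exactly and $D_i=D_{i-1}$. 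Buffer insertion, rejection, and buffer cleaning leave both $S$ and $\tau^+$ unchanged, so $D_i=D_{i-1}$ trivially. Therefore $f(S_n)\geq|S_n|\tau_n^+$.

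For invariant (ii), I would prove by induction on $i$ that every discarded $v$ satisfies $f(v|S_i)\leq\tau_i^-$. Selection and buffer insertion only enlarge $S$, so submodularity combined with the monotonicity of $\tau_i^-$ preserves the invariant; buffer cleaning removes an element exactly when the new $\tau_i^-$ already bounds its marginal gain. The delicate case is a swap, where $S_{i-1}\not\subseteq S_i$. Here Lemma \ref{prop:swap} yields
\[
f(v|S_i)\leq f(v|S_{i-1})+f(u|S_{i-1}\setminus u\cup v_i)\leq \tau_{i-1}^- + f(u|S_{i-1}\setminus u\cup v_i)=\tau_i^-,
\]
which is precisely why A\ref{alg:StreamClipper}.L8 augments $\tau^-$ by $f(u|S_{i-1}\setminus u\cup v_i)$. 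During the terminal greedy loop $S$ only grows and $\tau^-$ is frozen at $\tau_n^-$, so submodularity propagates the bound to every $S_j$, $j\in[n,n+k-|S_n|]$, giving $f(v|S_j)\leq\tau_n^-$ for all $v\in S^*\setminus(S_n\cup B_n)$.

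The main obstacle is the swap case in (ii): a swap is the only operation that can shrink $S$, so plain submodularity cannot by itself transport a previously established rejection certificate across the step, and the argument needs both Lemma \ref{prop:swap} and the exact threshold update in A\ref{alg:StreamClipper}.L8 in order to cancel the slack that a swap introduces. Everything else in Section \ref{sec:bound} then ports mechanically under the substitution $\tau^\pm\mapsto\tau_n^\pm$.
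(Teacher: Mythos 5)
Your proposal is correct and follows essentially the same route as the paper's own proof: it reduces the claim to the two invariants $f(S_n)\geq|S_n|\tau_n^+$ and $f(v|S_n)\leq\tau_n^-$ for discarded $v$, establishes each by induction over the stream using the exact $\rho$-update for $\tau^+$ and Lemma~\ref{prop:swap} together with the A\ref{alg:StreamClipper}.L8 increment for $\tau^-$, and then ports Section~\ref{sec:bound} under the substitution $\tau^\pm\mapsto\tau_n^\pm$. The potential-function phrasing $D_i=f(S_i)-|S_i|\tau_i^+$ is a cosmetic repackaging of the paper's induction, and your explicit treatment of buffer-cleaned elements is a small but welcome addition.
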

\begin{proof}
	For Algorithm~\ref{alg:StreamClipper}, after replacing $\tau^-$ and $\tau^+$ by $\tau_{n}^-$ and $\tau_{n}^+$ respectively, the analysis in Section \ref{sec:bound} holds true if
	\begin{equation}\label{equ:condt}
	\begin{array}{ll}
	&\forall v\in V\backslash(S_n\cup B_n), f(v|S_n)\leq\tau_{n}^-,\\
	&f(S_n)\geq |S_n|\tau_{n}^+,
	\end{array}
	\end{equation} 
	because the first condition in (\ref{equ:condt}) leads to the third inequality in (\ref{equ:t21}) due to submodularity ($S_n\subseteq S_j$ where $j$ indexes the greedy steps after passing $n$ elements), while the second condition results in (\ref{equ:large}), where $\tau^-$ and $\tau^+$ are replaced by $\tau_{n}^-$ and $\tau_{n}^+$. The rest reasoning follows the proof of Theorem \ref{the:k1Snbound} and Theorem \ref{the:Snbound}, and lead to the same approximation bound and order complexity.
	
	In the following, we prove (\ref{equ:condt}) is true. Firstly, if $v_i$ is rejected in step $i$, and the first swapping since step $i$ happens at step $j>i$, i.e., some element $u\in S_{j-1}$ is replaced by $v_j$, the marginal gain $f(v_i|S_j)$ can be upper bounded by using Lemma \ref{prop:swap}, i.e., 
	\begin{equation}
	\begin{array}{ll}
	f(v_i|S_j)&\leq f(v_i|S_{j-1})+f(u|S_j)\leq f(v_i|S_i)+f(u|S_j)\\
	&\leq\tau_{i-1}^-+f(u|S_j)\leq\tau_{j-1}^-+f(u|S_j)\leq\tau_{j}^-\leq\tau_{n}^-.
	\end{array}
	\end{equation}
	The first inequality is due to $S_j=S_{j-1}\backslash u\cup v_j$ and Lemma \ref{prop:swap}, the second inequality is due to $S_i\subseteq S_{j-1}$ and submodularity, the third inequality is due to $f(v_i|S_i)\leq\tau_{i-1}^-$, the fourth inequality are due to the fact that $\tau^-$ is non-decreasing in Algorithm~\ref{alg:StreamClipper}, the fifth inequality is due to A\ref{alg:StreamClipper}.L8 and non-decreasing property of $\tau^-$. By induction, we have $f(v_i|S_n)\leq \tau_{n}^-$. If no swapping happens, for each rejected element $v_i$, we directly have $f(v_i|S_n)\leq f(v_i|S_i)\leq \tau_{i-1}^-\leq \tau_{n}^-$ because of submodularity. Therefore, the first condition in (\ref{equ:condt}) is true.
	
	Secondly, $f(S_{i-1})\geq |S_{i-1}|\tau_{i-1}^+$ holds for $i=1$ because $S_0=\emptyset$, assume $f(S_{i-1})\geq |S_{i-1}|\tau_{i-1}^+$, if swapping happens in step $i$, according to $\rho$ in A\ref{alg:StreamClipper}.L6 and A\ref{alg:StreamClipper}.L9, we have $f(S_i)\geq |S_i|(\tau_{i-1}^++\rho)=|S_i|\tau_{i}^+$; if no swapping happens, we have $f(S_i)\geq |S_i|\tau_{i-1}^+=|S_i|\tau_{i}^+$ because $\tau_{i-1}^+=\tau_{i}^+$. By induction, we have $f(S_n)\geq |S_n|\tau_n^+$. Therefore, the second condition in (\ref{equ:condt}) is true. This completes the proof.
\end{proof}

\section{Extensions to Other Constraints}

\subsection{Knapsack Constraint}

The problem is modified to
\begin{equation}\label{equ:knapsackp}
\max_{S\subseteq V} f(S)~~s.t.~~\sum_{v\in S}c(v)\leq b.
\end{equation}

The following modification needs to be applied to Algorithm~\ref{alg:StreamBuffer}. The thresholding of $f(v|S)$ changes to thresholding of $f(v|S)/c(v)$, i.e., it adds $v$ to $S$ if $\frac{f(v|S)}{c(v)}\geq \tau^+$, removes $v$ if $\frac{f(v|S)}{c(v)}\leq \tau^-$, and saves $v$ in buffer $B$ otherwise. Accordingly, A\ref{alg:StreamBuffer}.L9 in greedy stage is replaced by $v^*=\argmax_{v\in B} f(v|S)/c(v)$. Then we can achieve the following bound analogous to Theorem \ref{the:k1Snbound}.

\begin{theorem}
	After applying the above modification for knapsack constraint to Algorithm~\ref{alg:StreamBuffer}, the following holds for the output $\tilde S$ and is and the optimal set $S^*$ of problem (\ref{equ:knapsackp}).
	\begin{equation}\label{equ:knapsackb}
	f(\tilde S)\geq\frac{1}{2}\left[\left(1-e^{-\frac{b-\alpha}{b-\beta}}\right)\left(f(S^*)-\beta \tau^-\right)+e^{-\frac{b-\alpha}{b-\beta}}\alpha \tau^+\right],
	\end{equation}
	where
	\begin{equation}\label{equ:alphabeta}
	\begin{array}{ll}
	&\alpha = b-\sum_{v\in S_+\backslash S_n}c(v)\leq \sum_{v\in S_n}c(v)\\
	&\beta = b - \sum_{u\in S^*\cap B_n}c(u) = \sum_{u\in S^*\backslash(S_n\cup B_n)}c(u).
	\end{array}
	\end{equation}
	Here $S_+=S\cup v_+$ is the solution set $S$ which firstly violates the knapsack constraint $\sum_{v\in S}c(v)\leq b$ because of adding $v_+$ in the final greedy stage of stream clipper.
\end{theorem}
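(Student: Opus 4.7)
The plan is to mirror the proof of Theorem~\ref{the:k1Snbound}, replacing marginal-gain thresholds with cost-benefit ratio thresholds and element counts with cumulative costs. The factor $1/2$ is not intrinsic to the telescoping argument; it comes at the end from the standard submodular-knapsack trick that $f(\tilde S) = \max\{f(S_+\setminus v_+), f(v_+)\} \geq \tfrac12 f(S_+)$ by monotone subadditivity, where both $S_+\setminus v_+$ and $\{v_+\}$ are feasible.

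First I would index the greedy steps in the knapsack-modified A\ref{alg:StreamBuffer}.L8--10 by $j \geq n$, running the greedy one step past the first budget violation so that the trajectory reaches $S_+ = S\cup v_+$. Writing $c_{j+1} = c(v_{j+1})$, the analog of the central chain (\ref{equ:t21}) is
\begin{equation}
f(S^*)\leq f(S_j)+\sum_{v\in S^*\setminus(S_j\cup B_n)} f(v|S_j)+\sum_{v\in S^*\cap B_n} f(v|S_j).
\end{equation}
For the rejected elements, the rule $f(v|S)/c(v) \leq \tau^-$ combined with submodularity gives $f(v|S_j)\leq c(v)\tau^-$, so their total contribution is at most $\beta\tau^-$. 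For the buffered-and-optimal elements, the cost-benefit greedy rule yields $f(v|S_j)/c(v)\leq f(v_{j+1}|S_j)/c_{j+1}$ for every $v\in S^*\cap B_n$ still in $B$; summing (using $c(S^*\cap B_n)=b-\beta$) gives $\sum f(v|S_j)\leq (b-\beta)[f(S_{j+1})-f(S_j)]/c_{j+1}$.

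Setting $\delta_j = f(S^*)-\beta\tau^--f(S_j)$, the above rearranges to $\delta_{j+1}\leq (1-c_{j+1}/(b-\beta))\delta_j$. Iterating and applying $1-x\leq e^{-x}$ with $\sum_j c_{j+1} = c(S_+\setminus S_n) = b-\alpha$ yields $\delta_{\mathrm{end}}\leq \delta_n\, e^{-(b-\alpha)/(b-\beta)}$. The selection rule $f(v|S)\geq c(v)\tau^+$, combined with a telescoping analogous to (\ref{equ:large}) and the stated $\alpha\leq \sum_{v\in S_n} c(v)$, gives $f(S_n)\geq \alpha\tau^+$. Substituting produces the convex-combination lower bound on $f(S_+)$, and halving via the subadditivity trick yields (\ref{equ:knapsackb}).

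The main obstacle is the variable-cost bookkeeping. The greedy must be extended exactly one step past feasibility so that the cumulative cost comes out to $b-\alpha$ and the exponent is correct; one must also be careful that elements of $S^*\cap B_n$ already absorbed into $S_j$ during earlier greedy steps contribute $0$ to the residual sum, rather than inflating the bound. A secondary subtlety is that the two equivalent expressions for $\beta$ implicitly require $c(S^*) = b$ and $S^*\cap S_n = \emptyset$; both can be imposed as worst-case reductions (pad $S^*$ to saturate the budget, and replace $S^*$ by $S^*\setminus S_n$ at the cost only of removing nonnegative terms from $f(S^*)$) without weakening the bound.
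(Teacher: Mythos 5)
Your proposal matches the paper's proof in all essentials: the same cost-benefit chain bounding $f(S^*)$ by $f(S_j)+\beta\tau^-$ plus a $(b-\beta)/c_{j+1}$ multiple of the greedy gain, the same recursion $\delta_{j+1}\leq(1-c_{j+1}/(b-\beta))\delta_j$ telescoped over $S_+\setminus S_n$ to produce the exponent $-(b-\alpha)/(b-\beta)$, the same use of $f(S_n)\geq\alpha\tau^+$, and the same subadditivity halving at the end (the paper compares $f(S)$ against the best buffered singleton $f(x^*)\geq f(v_+)$ rather than against $f(v_+)$ itself, and reaches the exponential via AM--GM rather than $1-x\leq e^{-x}$, but these are cosmetic). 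Your observations about the implicit assumptions behind the two expressions for $\beta$ and about elements of $S^*\cap B_n$ already absorbed into $S_j$ are legitimate subtleties that the paper's proof also leaves unaddressed.
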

It can be verified that the right hand side of (\ref{equ:knapsackb}) is larger than $(1/4)f(S^*)$.
\begin{proof}
	By following the proof of greedy algorithm for submodular maximization with knapsack constraint, we study the solution $S_+=S\cup v_+$ firstly violating the constraint $\sum_{v\in S}c(v)\leq b$ in the greedy stage. For the first step of the modified greedy algorithm,
	\begin{equation}\label{equ:t22}
	\begin{array}{ll}
	&f(S^*)\\
	&\leq f(S_n\cup S^*)\leq f(S_n)+\sum\limits_{v\in S^*\backslash S_n}f(v|S_n)\\
	&\leq f(S_n)+\sum\limits_{v\in S^*\backslash (S_n\cup B_n)}c(v)\cdot\frac{f(v|S_n)}{c(v)}+\\
	&~~~~\sum\limits_{v\in S^*\cap B_n}c(v)\cdot\frac{f(v|S_n)}{c(v)}\\
	&\leq f(S_n)+\sum\limits_{u\in S^*\backslash(S_n\cup B_n)}c(u)\cdot \tau^-+\\
	&~~~~\sum\limits_{u\in S^*\cap B_n}c(u)\cdot\frac{f(S_{n+1})-f(S_n)}{c(S_{n+1}\backslash S_n)}\\
	&=f(S_n)+\beta \tau^-+\frac{b-\beta}{c(S_{n+1}\backslash S_n)}\left[f(S_{n+1})-f(S_n)\right].
	\end{array}
	\end{equation}
	After rearranging,
	\begin{equation}
	[f(S^*)-\beta \tau^-]-f(S_n)=\frac{b-\beta}{c(S_{n+1}\backslash S_n)}\left[f(S_{n+1})-f(S_n)\right]
	\end{equation}
	Let
	\begin{equation}
	\delta_n=f(S^*)-\beta \tau^-,
	\end{equation}
	then the rearranged inequality equals to
	\begin{equation}
	\delta_{n+1}\leq \left(1-\frac{c(S_{n+1}\backslash S_n)}{b-\beta}\right)\delta_n,
	\end{equation}
	It can be easily verified that the above inequality between $\delta_{n+1}$ and $\delta_n$ also holds for all $\delta_{i+1}$ and $\delta_i$ with $n\leq i\leq n+|S|-1$. Repeatedly applying these inequalities yields
	\begin{equation}
	\begin{array}{ll}
	\delta_{n+|S_+|-|S_n|}&\leq\prod\limits_{i=n}^{|S_+|-1}\left(1-\frac{c(S_{i+1}\backslash S_i)}{b-\beta}\right)\cdot\delta_n\\
	&=\prod\limits_{v\in S_+\backslash S_n}\left(1-\frac{c(v)}{b-\beta}\right)\cdot\delta_n\\
	&\leq\left(1-\frac{\sum_{v\in S_+\backslash S_n}c(v)}{(b-\beta)(|S_+|-|S_n|)}\right)^{|S_+|-|S_n|}\cdot\delta_n\\
	&=\left(1-\frac{b-\alpha}{(b-\beta)(|S_+|-|S_n|)}\right)^{|S_+|-|S_n|}\cdot\delta_n\\
	&\leq e^{-\frac{b-\alpha}{b-\beta}}\delta_n.
	\end{array}
	\end{equation}
	The second inequality is due to AM-GM inequality (i.e., the inequality of arithmetic and geometric means). Hence, we have
	\begin{equation}\label{equ:Splus}
	\begin{array}{ll}
	&f(S_+)\\
	&\geq \left(1-e^{-\frac{b-\alpha}{b-\beta}}\right)[f(S^*)-\beta \tau^-]+e^{-\frac{b-\alpha}{b-\beta}}f(S_n)\\
	&\geq \left(1-e^{-\frac{b-\alpha}{b-\beta}}\right)[f(S^*)-\beta \tau^-]+e^{-\frac{b-\alpha}{b-\beta}}\sum\limits_{v\in S_n}c(v)\tau^+\\
	&\geq \left(1-e^{-\frac{b-\alpha}{b-\beta}}\right)[f(S^*)-\beta \tau^-]+e^{-\frac{b-\alpha}{b-\beta}}\alpha \tau^+.
	\end{array}
	\end{equation}
	The second inequality in (\ref{equ:Splus}) is due to 
	\begin{equation}\label{equ:large1}
	f(S_n)\geq \sum_{v\in S_n}c(v)\tau^+,
	\end{equation}
	and the last inequality in (\ref{equ:Splus}) is due to the first inequality in (\ref{equ:alphabeta}), which is resulted from
	\begin{equation}
	b\leq \sum_{v\in S_+}c(v)\leq\sum_{v\in S_+\backslash S_n}c(v)+\sum_{v\in S_n}c(v).
	\end{equation}
	According to the modified greedy algorithm for knapsack constraint, given
	\begin{equation}
	x^*=\argmax_{v\in B_n}f(v),
	\end{equation}
	we have
	\begin{equation}
	f(S)+f(x^*)\geq f(S_+). 
	\end{equation}
	Therefore, the output $\tilde S=\argmax\{f(S),f(x^*)\}$ satisfies
	\begin{equation}
	f(\tilde S)\geq\frac{1}{2}f(S_+).
	\end{equation}
	Recall the lower bound of $f(S_+)$ given in (\ref{equ:Splus}), that completes the proof.
\end{proof}

\subsection{Matroid Constraint}

The problem is modified to
\begin{equation}\label{equ:matroidp}
\max_{S\subseteq V} f(S)~~s.t.~~S\in\mathcal I,
\end{equation}
where $\mathcal I$ is the set of independent sets.

The following modification needs to be applied to Algorithm~\ref{alg:StreamBuffer}. Extra condition $S\cup v_i\in\mathcal I$ is added to A\ref{alg:StreamBuffer}.L2. And A\ref{alg:StreamBuffer}.L10 is executed only when $S\cup v^*\in\mathcal I$. 

\begin{theorem}
	After the above modification, Algorithm~\ref{alg:StreamBuffer} outputs a solution $S$ with approximation bound
	\begin{equation}\label{equ:matroid}
	f(S)\geq \frac{1}{2}\left[f(S^*)-(k-k^{'}-|S_n|)\tau^+\right],
	\end{equation}
	where $S^*$ is the optimal set of problem (\ref{equ:matroidp}), $k^{'}=|A|=|C|$ such that
	\begin{equation}\label{equ:AC}
	\begin{array}{ll}
	A=\{v\in S\backslash S_n:\phi^{-1}(v)\in S^*\cap B_n\},\\
	C=\{v\in S^*\cap B_n:\phi(v)\in S\backslash S_n\}.
	\end{array}
	\end{equation}
	wherein $\phi:S^*\rightarrow S$ is the bijection whose existence has been guaranteed by the matroid property, and $\phi^{-1}:S\rightarrow S^*$ is its inverse.
\end{theorem}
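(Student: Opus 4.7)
My plan is to adapt the classical $1/2$-approximation proof for submodular maximization under a matroid constraint via greedy, tracking an explicit penalty for the elements whose contribution the standard greedy comparison cannot absorb. The starting point is the matroid exchange bijection $\phi:S^{*}\to S$: $\phi(v)=v$ on $S^{*}\cap S$, and for $v\in S^{*}\setminus S$ one takes $\phi(v)\in S\setminus S^{*}$ with $(S\setminus\phi(v))\cup v\in\mathcal{I}$. I would further choose $\phi$ so as to preferentially pair each $v\in S^{*}\cap B_n\setminus S$ with some $\phi(v)\in S\setminus S_n$ (the greedy-added slots), which is exactly what the sets $A$ and $C$ of size $k'$ capture.

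Monotonicity and submodularity immediately give $f(S^{*})\le f(S\cup S^{*})\le f(S)+\sum_{v\in S^{*}\setminus S}f(v\mid S)$. I would then partition $S^{*}\setminus S$ according to (i) whether $v$ was buffered ($v\in B_n$) or rejected ($v\in S^{*}\setminus(S_n\cup B_n)$), and (ii) whether $\phi(v)$ was streaming-added ($\phi(v)\in S_n$) or greedy-added ($\phi(v)\in S\setminus S_n$). The clean case is $v\in C$: because $(S\setminus\phi(v))\cup v\in\mathcal{I}$ and $\mathcal{I}$ is hereditary, $v$ was a feasible candidate at the greedy step at which $\phi(v)$ was chosen, so the modified $\argmax$ rule from A\ref{alg:StreamBuffer}.L9 yields $f(v\mid S_{\mathrm{prev}(\phi(v))})\le f(\phi(v)\mid S_{\mathrm{prev}(\phi(v))})$. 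Submodularity pushes this into $f(v\mid S)$ and, summed over $C$, telescopes against the greedy gains of the elements of $A$, giving a contribution at most $f(S)-f(S_n)$.

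For the remaining cases I would fall back on the threshold conditions. Any buffered $v$ with $\phi(v)\in S_n$ had marginal gain below $\tau^{+}$ at its arrival time (otherwise it would have been selected into $S$, contradicting $v\notin S$), so submodularity gives $f(v\mid S)\le\tau^{+}$. Any rejected $v$ had marginal gain at most $\tau^{-}$, so $f(v\mid S)\le\tau^{-}$. Combined with the streaming-side telescope $\sum_{u\in S_n}f(u\mid S_{\mathrm{prev}(u)})=f(S_n)\ge|S_n|\tau^{+}$, careful bookkeeping of the $\phi$-matches yields $f(S^{*})\le 2f(S)+(k-k'-|S_n|)\tau^{+}$, which is the stated bound after rearrangement.

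The main obstacle is subcase (iv): a rejected $v\in S^{*}$ paired with a greedy-added $\phi(v)\in S\setminus S_n$. Here $v\notin B_n$ so the greedy-selection comparison is unavailable and the clean $1/2$ telescope cannot be closed; the remedy is to pay a $\tau^{+}$ penalty per such match. Verifying that the total number of these bad matches is at most $k-k'-|S_n|$ hinges on the pigeonhole identity $|S\setminus S_n|=k-|S_n|$ together with the fact that the $|A|=k'$ pairing exhausts $C$ inside the greedy-added slots first, which is why $\phi$ must be chosen with this preference.
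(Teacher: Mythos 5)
Your proposal is correct and follows essentially the same route as the paper: start from $f(S^*)\le f(S)+\sum_{v\in S^*\setminus S}f(v|S)$, split $S^*\setminus S$ into rejected elements (charged $\tau^-$), buffered elements matched by $\phi$ into the greedy-added slots (the set $C$, absorbed against the gains of $A$ via the greedy selection rule), and the remaining buffered elements (charged $\tau^+$), then subtract the $|S_n|\tau^+$ credit from $f(S_n)\ge|S_n|\tau^+$ and use $\tau^-\le\tau^+$ to reach $f(S^*)\le 2f(S)+(k-k'-|S_n|)\tau^+$. The only inessential difference is that you insist on choosing $\phi$ to maximize the pairing into $S\setminus S_n$, whereas the paper simply states the bound in terms of whatever $k'=|A|=|C|$ the exchange bijection yields.
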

\begin{proof}
	Let $k_n=|S^*\backslash S_n\cup B_n|$, we have
	\begin{equation}\label{equ:t23}
	\begin{array}{ll}
	&f(S^*)\\
	\leq& f(S\cup S^*)\leq f(S)+\sum\limits_{v\in S^*\backslash S}f(v|S)\\
	\leq& f(S)+\sum\limits_{v\in S^*\backslash (S_n\cup B_n)}f(v|S)+\sum\limits_{v\in S^*\cap B_n}f(v|S)\\
	\leq& f(S)+k_n\tau^-+\sum\limits_{v\in C}f(v|S)+\sum\limits_{v\in (S^*\cap B_n)\backslash C}f(v|S)\\
	\leq& [f(S)+\sum\limits_{v\in A}f(v|S)]+k_n\tau^-+(k-k_n-k^{'})\tau^+\\
	\leq& 2f(S)-\sum\limits_{v\in S\backslash (S_n\cup A)}f(v|S)-\\
	&\sum\limits_{v\in S_n}f(v|S)+k_n\tau^-+(k-k_n-k^{'})\tau^+\\
	\leq& 2f(S)-\sum\limits_{v\in S\backslash (S_n\cup A)}f(v|S)-\\
	&|S_n|\tau^++k_n\tau^-+(k-k_n-k^{'})\tau^+\\
	\leq& 2f(S)-|S_n|\tau^++k_n\tau^-+(k-k_n-k^{'})\tau^+\\
	=& 2f(S)-\left[(k^{'}+|S_n|-k)\tau^++k_n(\tau^+-\tau^-)\right]\\
	\leq& 2f(S)-\left[(k^{'}+|S_n|-k)\tau^+\right].
	\end{array}
	\end{equation}
	The fifth inequality in (\ref{equ:t23}) is due to $|(S^*\cap B_n)\backslash C|=k-k_n-k^{'}$, the sixth inequality uses the fact
	\begin{equation}
	f(S)=\sum\limits_{v\in A}f(v|S)+\sum\limits_{v\in S\backslash (S_n\cup A)}f(v|S)+\sum\limits_{v\in S_n}f(v|S),
	\end{equation}
	the seventh inequality in (\ref{equ:t23}) is a result of monotonicity, the eighth inequality in (\ref{equ:t23}) is due to $\tau^+\geq \tau^-$.
	
	By rearranging (\ref{equ:t23}), we have
	\begin{equation}
	f(S)\geq \frac{1}{2}\left[f(S^*)-(k-k^{'}-|S_n|)\tau^+\right].
	\end{equation}
	This completes the proof.
\end{proof}

\newpage

\begin{figure*}[htp]\vspace{-3mm}
	\begin{center}
		\includegraphics[width=1\linewidth]{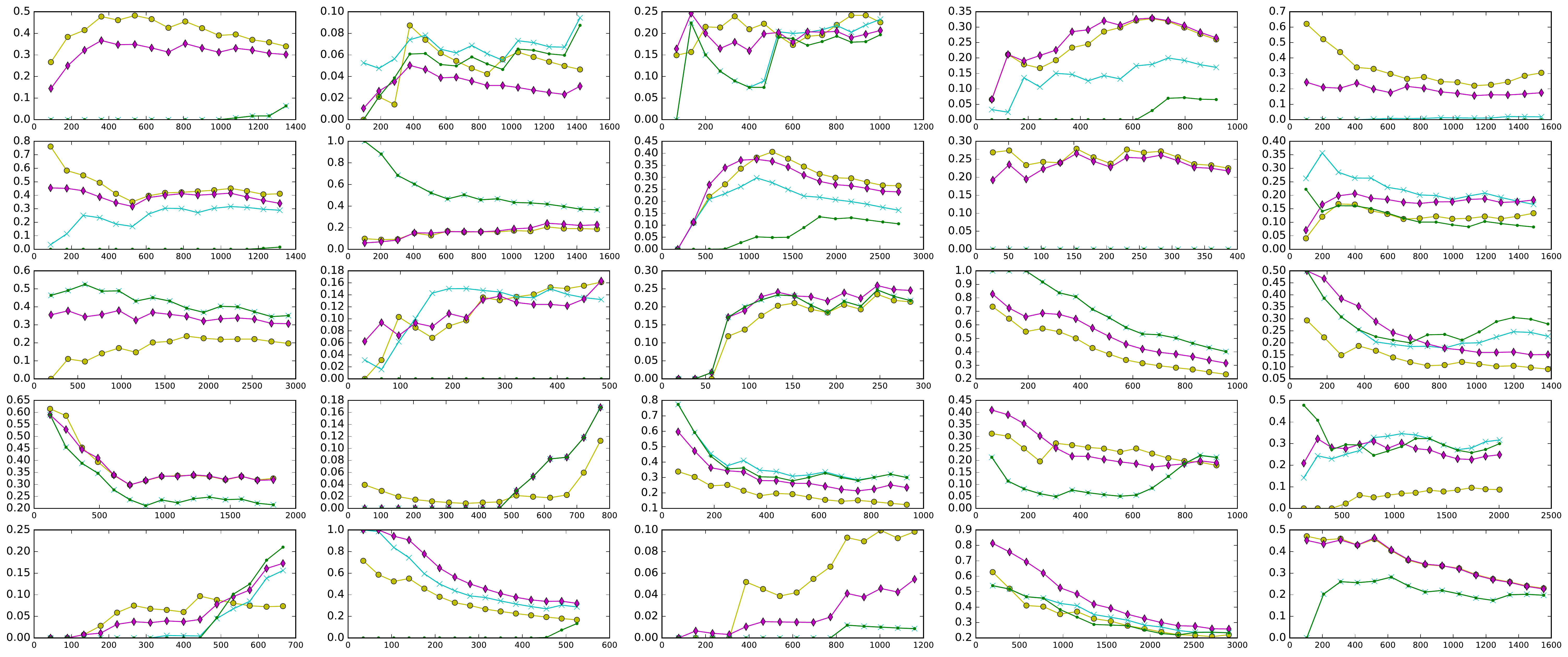}
	\end{center}\vspace{-5mm}
	\caption{Recall of the summaries generated by lazy greedy (``{\color{yellow}{$\bullet$}}''), sieve-streaming ( ``{\color{cyan}{$\times$}}''), stream clipper (``{\color{magenta}{$\blacklozenge$}}'') and the first $15\%$ frames (``{\color{green}{$\cdot$}}'') comparing to reference summaries of different sizes between $[0.02|V|, 0.32|V|]$ based on ground truth score (voting from $15$ users) on $25$ videos from SumMe. Each plot associates with a video.}
	\label{fig:videoR}
\end{figure*}\vspace{-3mm}

\begin{figure*}[htp]\vspace{-1mm}
	\begin{center}
		\includegraphics[width=1\linewidth]{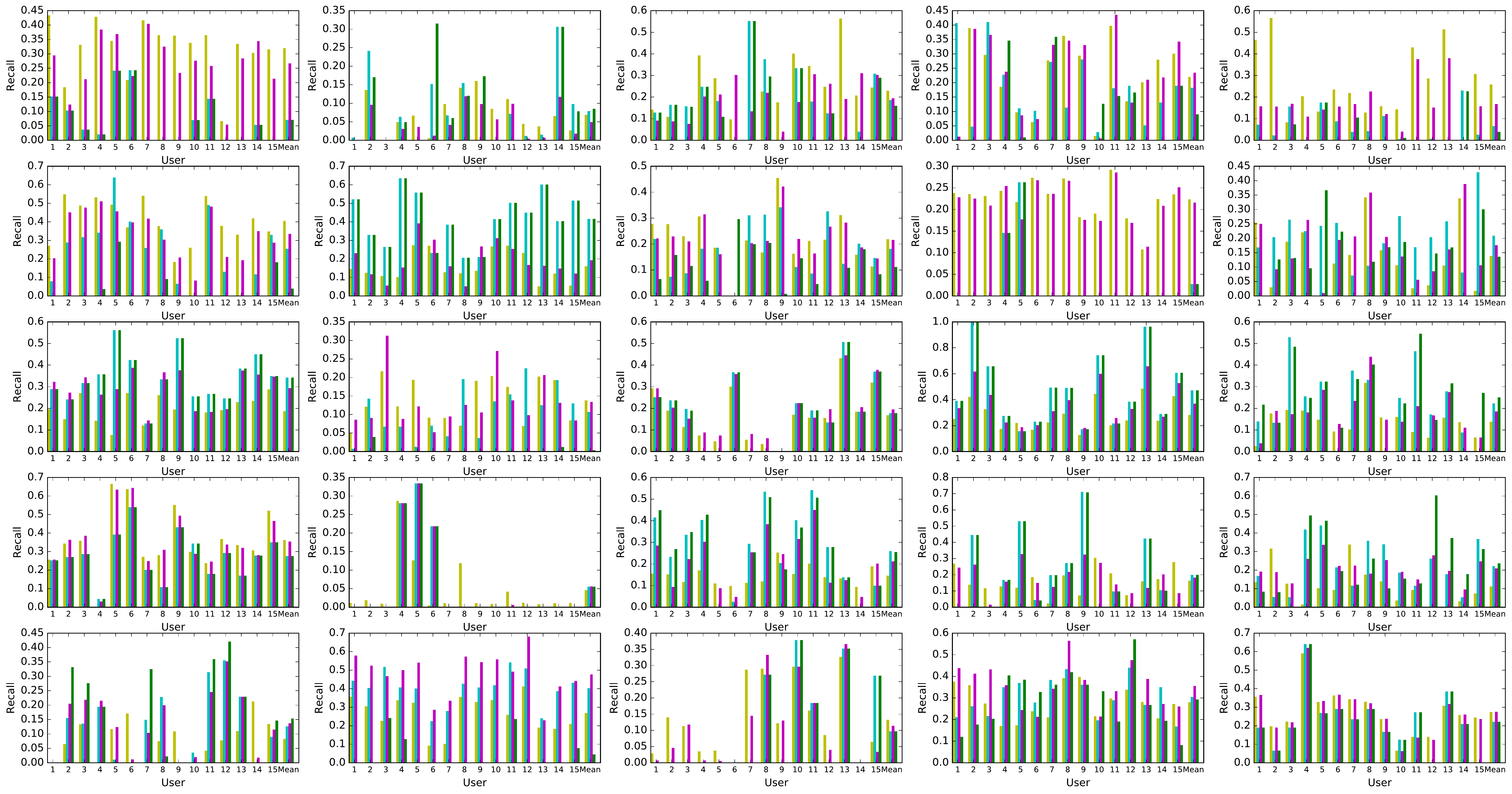}
	\end{center}\vspace{-2mm}
	\caption{Recall of the summaries generated by greedy (yellow bar), sieve-streaming ( cyan bar), stream clipper (magenta bar) and the first $15\%$ frames (green bar) comparing to reference summaries from $15$ users on $25$ videos from SumMe dataset. Each plot associates with a video.}
	\label{fig:videoR_all}
\end{figure*}\vspace{-1mm}

\vspace{-1mm}
\begin{table*}[htp]
	\caption{Information of SumMe dataset and time cost (CPU seconds) of different algorithms.}
	\begin{center}
		\begin{tabular}{l|c||c|c|c}
			\hline
			Videoname &\#frames &Lazy Greedy &Sieve-streaming &Stream Clipper\\
			\hline
			Air Force One &4494 &907.3712 &3.9182  &75.5463 \\
			Base jumping &4729 &164.1434 &5.5865  &30.2570 \\
			Bearpark climbing &3341 &177.8583 &3.7311  &34.2229 \\
			Bike polo &3064 &96.5305 &3.9578  &24.7901 \\
			Bus in rock tunnel &5131 &505.7766 &6.0088  &130.8577 \\
			Car over camera &4382 &146.9416 &5.3323  &59.2436 \\
			Car railcrossing &5075 &852.1686 &5.2265 &123.0835 \\
			Cockpit landing &9046 &669.8063 &12.3186 &103.6095 \\
			Cooking &1286 &30.0717 &1.2868 &4.9270\\
			Eiffel tower &4971 &304.2690 &5.4755 &81.4899\\
			Excavators river crossing &9721 &1507.3028 &13.8139 &283.7986 \\
			Fire Domino &1612 &34.2871 &1.8814 &9.4465 \\
			Jumps &950 &15.0508 &0.9055 &5.1711\\
			Kids playing in leaves &3187 &221.4644 &3.4660 &37.7304 \\
			Notre Dame &4608 &169.1235 &5.1406 &69.4589 \\
			Paintball &6096 &763.3255 &6.7853 &114.3241 \\
			Paluma jump &2574 &210.8670 &2.5342 &25.5281 \\
			Playing ball &3120 &132.7437 &3.2250 &14.0948\\
			Playing on water slide &3065 &111.7358 &3.4088 &30.9812 \\
			Saving dolphines &6683 &435.0732 &7.3322 &78.4372\\
			Scuba &2221 &45.6177 &2.5213 &8.3734 \\
			St Maarten Landing &1751 &19.0717 &2.8701 &3.5580 \\
			Statue of Liberty &3863 &160.7075 &4.0164 &56.4238 \\
			Uncut evening flight &9672 &718.7015 &14.6717 &122.7112 \\
			Valparaiso downhill &5178 &428.3941 &6.0002 &70.6994\\
			\hline
		\end{tabular}\label{table:SumMe}
	\end{center}
\end{table*}

\end{document}